\def\eqref#1{equation~\ref{#1}}
\def\1{\bm{1}}
\DeclareMathAlphabet{\mathsfit}{\encodingdefault}{\sfdefault}{m}{sl}
\SetMathAlphabet{\mathsfit}{bold}{\encodingdefault}{\sfdefault}{bx}{n}
\newtheorem{theorem}{Theorem}[section]
\newtheorem{corollary}{Corollary}[theorem]
\newtheorem{lemma}[theorem]{Lemma}
\newtheorem{proposition}[theorem]{Proposition}
\title{You Only Measure Once:\\
On Designing Single-Shot Quantum Machine Learning Models}
\author{Chen-Yu Liu$^{1}$ \quad Leonardo Placidi$^{2,5}$ \quad Kuan-Cheng Chen$^{3}$ \\ \textbf{Samuel Yen-Chi Chen}$^{4}$ \quad \textbf{Gabriel Matos}$^{1}$ \\
$^{1}$ Quantinuum, London, UK\\
$^{2}$ Quantinuum, Tokyo, Japan\\
$^{3}$ Imperial College London, London, UK\\
$^{4}$ Brookhaven National Laboratory, US\\
$^{5}$ The University of Osaka, Osaka, Japan\\
\texttt{\{chen-yu.liu, leonardo.placidi, gabriel.matos\}@quantinuum.com}\\
\texttt{kuan-cheng.chen17@imperial.ac.uk}\\
\texttt{ycchen1989@ieee.org}
}
\begin{document}

\maketitle

\begin{abstract}

Quantum machine learning (QML) models conventionally rely on repeated measurements (shots) of observables to obtain reliable predictions. This dependence on large shot budgets leads to high inference cost and time overhead, which is particularly problematic as quantum hardware access is typically priced proportionally to the number of shots. In this work we propose \emph{You Only Measure Once (Yomo)}, a simple yet effective design that achieves accurate inference with dramatically fewer measurements, down to the single-shot regime. Yomo replaces Pauli expectation-value outputs with a probability aggregation mechanism and introduces loss functions that encourage sharp predictions. Our theoretical analysis shows that Yomo avoids the shot-scaling limitations inherent to expectation-based models, and our experiments on MNIST and CIFAR-10 confirm that Yomo consistently outperforms baselines across different shot budgets and under simulations with depolarizing channels. By enabling accurate single-shot inference, Yomo substantially reduces the financial and computational costs of deploying QML, thereby lowering the barrier to practical adoption of QML.

\end{abstract}

\section{Introduction}

Quantum computing \citep{nielsen2010quantum} has emerged as a promising paradigm for advancing computational capabilities beyond the classical regime. In particular, quantum machine learning (QML) \citep{cerezo2022challenges, huang2022quantum, biamonte2017quantum, benedetti2019parameterized} seeks to leverage quantum resources for learning tasks such as classification \citep{perez2020data, schuld2021effect, liu2025quantum, gong2024quantum}, generation \citep{khatri2024quixer}, and reinforcement learning \citep{chen2020variational, liu2024qtrl}. Unlike classical machine learning, however, QML inherently involves probabilistic measurement outcomes. To obtain reliable outputs, QML models typically require repeated circuit executions, aggregating many measurement shots to estimate expectation values of observables. This reliance on repeated measurements constitutes one of the fundamental distinctions between classical and quantum machine learning.

As a result, the resource requirements for training and inference in QML are substantial. Since access to quantum hardware is predominantly cloud-based and requires waiting in queues, the monetary cost of usage scales proportionally with the number of shots. In addition, shot repetition contributes significant time overhead: in general only a single quantum processing unit (QPU) is available to execute the circuit, limiting opportunities for parallelization. Given the scarcity of quantum hardware resources, this repetition exacerbates both the financial and computational burden of deploying QML models.
\begin{figure}[ht]
\centering
\includegraphics[width=\linewidth]{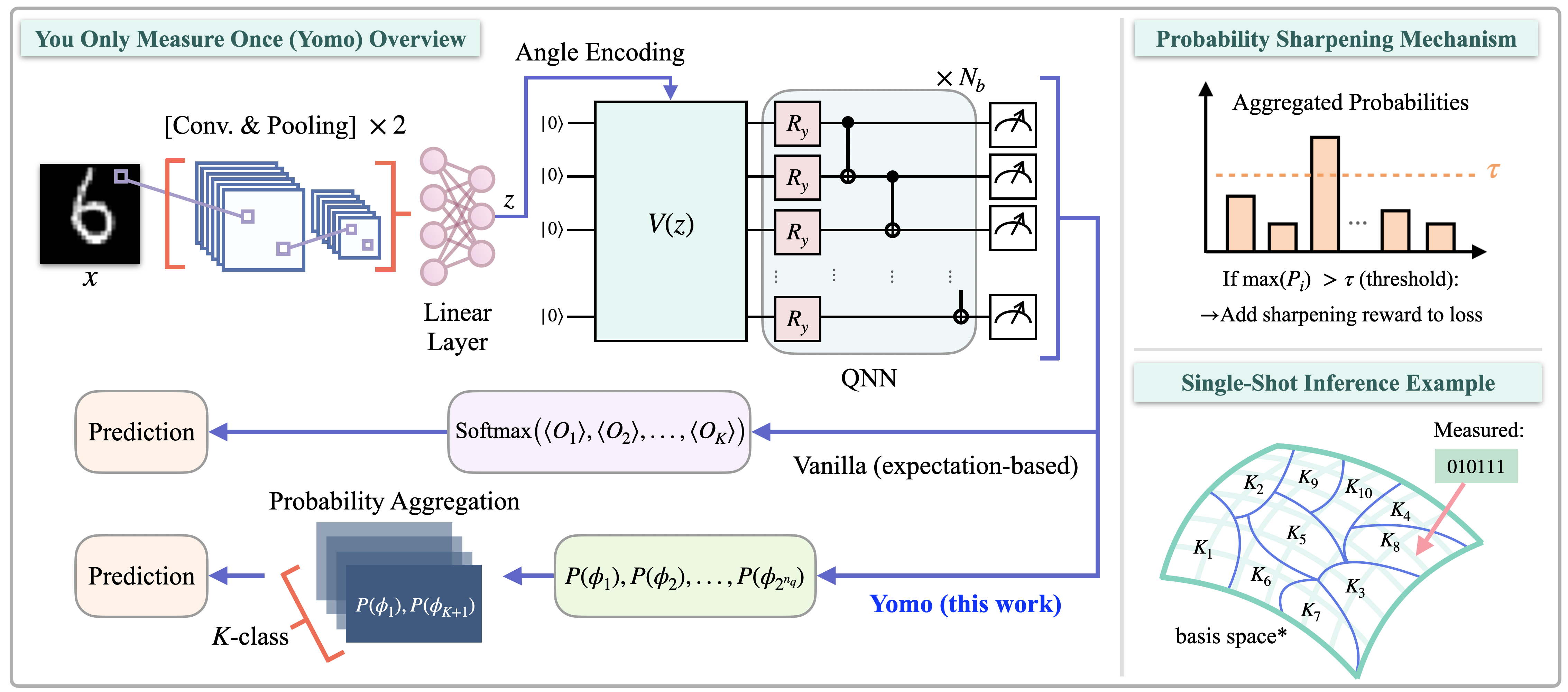}
\caption{
Overview of the proposed Yomo framework.
Left: The input image is processed by two convolution–pooling layers and a linear layer before angle encoding into a parameterized quantum circuit (QNN) with $N_b$ layers. The vanilla (Pauli expectation-based) approach applies a softmax over measured expectation values $\langle O_1\rangle, \dots, \langle O_K\rangle$ to produce class probabilities. In contrast, Yomo aggregates measurement probabilities $P(\phi_1), \dots, P(\phi_{2^{n_q}})$ into $K$ classes for prediction.
Top-right: Probability Sharpening Mechanism. During training, if the maximum aggregated probability exceeds a threshold $\tau$, an additional sharpening reward is added to the loss, encouraging confident and peaked predictions.
Bottom-right: Single-shot inference example. A single measured bitstring is mapped to one of the $K$ classes via probability aggregation, enabling accurate inference with only one measurement shot. (*The basis space is shown only for graphical illustration. Basis states with the same label do not necessarily form a contiguous group, but may be distributed across different regions of the basis space.)
}
 \label{fig:scheme_1}
\end{figure}
From the broader perspective of a machine learning model’s lifecycle, the inference stage typically dominates the overall cost \citep{sardana2023beyond, samsi2023words}. This observation further motivates the development of shot-efficient QML models, particularly in the inference phase. While several recent works have begun exploring shot-efficient methods in QML and variational quantum algorithms (VQAs) \citep{phalak2023shot, kim2024distribution, liang2024artificial}, and theoretical concepts of single-shot inference have been proposed \citep{recio2025single}, there remains no clear design pathway for implementing such models. Moreover, the practical implications of a truly single-shot inference QML model, both in terms of hardware usage and experimental throughput, have yet to be systematically studied.

Inspired by these considerations, we propose \textbf{Y}ou \textbf{O}nly \textbf{M}easure \textbf{O}nce (\textbf{Yomo}), a simple yet effective design for shot-efficient QML. Yomo departs from the conventional Pauli expectation-value paradigm by leveraging probability aggregation of measurement outcomes, enabling accurate predictions even in the single-shot regime. Empirically, Yomo achieves competitive or superior classification accuracy compared to expectation-based baselines while requiring an order of magnitude fewer shots. This translates directly into reduced inference cost and time overhead: for a fixed budget, one can perform many more experiments, or alternatively, achieve comparable performance at a fraction of the resource usage. By bridging theoretical insights with practical implementation, Yomo demonstrates a clear pathway toward cost-efficient QML.

In summary, our contributions are as follows: 
\begin{enumerate}
    \item Introduction of Yomo framework, achieving high classification accuracy even in the single-shot inference regime.  

    \item Development of formal results bounding the number of measurement shots required to achieve a target error probability, proving that Yomo can surpass conventional expectation-based QML models in shot efficiency.  

    \item Extensive experiments on MNIST and CIFAR-10 demonstrating that Yomo consistently outperforms Vanilla QML in single-shot and few-shot regimes, validated under both noiseless simulation and simulated noise models derived from public single-qubit and two-qubit error rates of current quantum hardware.  
    
\end{enumerate}

\section{Related Works}  
Research on reducing measurement overhead in quantum computing has mainly focused on \emph{shot-efficient estimation methods} such as classical shadows \citep{huang2020predicting}, and on shot allocation strategies during QML training \citep{phalak2023shot, liang2024artificial}. More recently, \citet{recio2025single} introduced \emph{single-shot QML}, while others explored \emph{train-on-classical, deploy-on-quantum} paradigms \citep{duneau2024scalable, recio2025train}. Our work differs by addressing the inference stage, where deployment costs dominate. A more comprehensive review of related works is provided in Appendix~\ref{app:related}.

\section{Preliminary: Quantum Machine Learning Modeling}

QML integrates the expressive power of quantum circuits with classical machine learning techniques \citep{cerezo2022challenges, huang2022quantum, biamonte2017quantum, benedetti2019parameterized}. A standard workflow consists of two main components: (i) a classical feature extractor that compresses high-dimensional input data, and (ii) a quantum neural network (QNN) that processes the encoded features within a quantum state space.
\paragraph{Model design.} Let the input be denoted by $x \in \mathbb{R}^d$. A classical feature extractor $f_{\theta_c}(\cdot)$, parameterized by $\theta_c$, such as a convolutional–linear network, maps it into a feature vector:
\begin{equation}
z = f_{\theta_c}(x) \in \mathbb{R}^{n_f}.
\end{equation}
The extracted feature vector $z \in \mathbb{R}^{n_f}$ is encoded into the quantum circuit via angle encoding, where features are sequentially mapped to single-qubit rotations along all three axes $(R_y, R_z, R_x)$ in a cyclic fashion\footnote{In general, angle encoding can be implemented using different choices or arrangements of single-qubit gates; here we present only one example.}. Specifically, each feature dimension $z_j$ parametrizes one rotation gate, assigned to a qubit following the repeating pattern $(R_y, R_z, R_x)$, creating a quantum state $|\psi(z)\rangle$ with $n_q$ qubits:
\begin{equation}
|\psi(z)\rangle = V(z) |0\rangle^{\otimes n_q} =
\prod_{i=1}^{n_f}
R_{\alpha(j)}^{\,i \bmod n_q}\!\big(z_i\big)\,
|0\rangle^{\otimes n_q},
\quad
\alpha(j) \in \{y,z,x\},
\end{equation}
where $\alpha(j)$ denotes the rotation axis determined by the cyclic order $(y,z,x)$, and $R_x^i$ means the $R_x$ gate is applied to the $i$-th qubit.

The encoded state is then processed by a variational circuit composed of multiple blocks of parameterized single-qubit rotations and entangling gates (e.g., CNOTs). With $n_q$ qubits and $N_b$ layers, the variational ansatz can be expressed as:
\begin{equation}
\label{eq:ansatz_qnn}
 U(\mathbf{\theta}) = \prod_{\ell=1}^{N_b} \left(\prod_{i=1}^{n_q-1} \text{CNOT}^{i, i+1} \prod_{j=1}^{n_q} R_y^j \left (\theta_{j}^{(\ell)} \right) \right).
\end{equation}
yielding the final state
$|\psi(z,\theta)\rangle = U(\theta)V(z)|0\rangle^{\otimes n_q}$.
\paragraph{Pauli Expectation-based outputs.} 
In the conventional QML model (called Vanilla in the following of this study), class scores are obtained by measuring expectation values of observables in the computational basis. Each class $k$ is associated with a Hermitian operator $O_k$, typically chosen as a tensor product of Pauli matrices. The specific operator used in this study is provided in Appendix~\ref{sec:hyperp}. The score for class $k$ is then:
\begin{equation}
    \mu_k = \langle \psi(z,\theta) | O_k | \psi(z,\theta) \rangle.
\end{equation}
These scores form the logits of a softmax classifier:
\begin{equation}
p_k = \frac{\exp(\mu_k)}{\sum_{j=1}^K \exp(\mu_j)}, \quad \hat{y} = \arg\max_k p_k.
\end{equation}

\paragraph{Loss function.}
For training the Vanilla QML model, we employ the standard cross-entropy loss to align the predicted probability distribution with the true class labels. Given logits $\mu_k$ derived from expectation values and the corresponding softmax probabilities $p_k$, the cross-entropy loss is defined as (with sample number $N_s$)
\begin{equation}
\mathcal{L}_{\text{CE}} \;=\; -\frac{1}{N_s}\sum\nolimits_{i=1}^{N_s} y_i \log p_i,
\end{equation}
where $y_i$ is the one-hot encoded ground-truth label and $p_i$ denotes the predicted probability for class $i$. This objective encourages the model to assign high probability to the correct class while penalizing incorrect predictions, serving as the standard baseline criterion in classification tasks.

\paragraph{Measurement considerations.}
Since expectation values are ensemble quantities, they cannot be extracted precisely from a single run of the quantum circuit. Instead, the circuit must be executed repeatedly, each time measuring in the computational basis, and the outcomes are averaged to approximate the expectation value. With eigenvalues bounded in $[-1,1]$, Hoeffding’s inequality shows that $N$ measurement shots reduce the estimation error at a rate $O(1/\sqrt{N})$ (more in Appendix~\ref{appendix:shot_complexity}). Thus, reliable predictions require a large shot budget, especially when the decision margin $\Delta$ between the top-two class scores is small. As a result, expectation-based inference is both time- and resource-intensive, since quantum hardware must be reset and re-executed for every shot.

\section{Yomo Model}
\label{sec:yomo}
With the motivation of reducing the measurement shot count of a QML model during inference, it is possible to investigate what kind of the design is required to make such behavior possible, construct the component, and even push the boundary to single-shot inference. Inspired by the concept of ``You Only Look Once (YOLO)'' \citep{redmon2016you}, we propose Yomo: \textbf{Y}ou \textbf{O}nly \textbf{M}easure \textbf{O}nce, which can achieve high testing accuracy during inference with only single-shot measurement of quantum circuit. Fig.~\ref{fig:scheme_1} provides a comprehensive overview of Yomo.

\paragraph{Model design.}
Yomo shares the same construction as the expectation-based (Vanilla) QML model up to the preparation of the quantum state. An input $x \in \mathbb{R}^d$ is mapped into a feature vector $z = f_{\theta_c}(x) \in \mathbb{R}^{n_f}$ by a classical feature extractor, which is then embedded into qubits through angle encoding with cyclic rotations $(R_y, R_z, R_x)$. The encoded state is processed by a variational QNN consisting of parameterized single-qubit rotations and entangling CNOT layers, yielding the final state
$|\psi(z,\theta)\rangle = U(\theta)V(z)|0\rangle^{\otimes n_q}$.
Finally, unlike Vanilla QML that computes expectation values of Pauli observables, Yomo performs computational basis measurement on all qubits, producing a probability distribution over $2^{n_q}$ basis states\footnote{At this stage, obtaining the full set of $2^{n_q}$ probabilities during training requires exact state-vector simulation. For fairness, the Vanilla baseline is also trained under exact simulation. We discuss implications of this constraint in the following sections.}:
\begin{equation}
P(\phi) = \big| \langle \phi |\psi(z,\theta)\rangle \big|^2 \;=\; \langle \psi(z,\theta) | \Pi_\phi | \psi(z,\theta)\rangle,
\quad \phi \in \{0,1\}^{n_q}, \; \Pi_\phi = |\phi\rangle\langle\phi|.
\end{equation}
That is, the basis probabilities can be understood as expectation values of projection operators ${\Pi_\phi}$ forming the computational-basis POVM (positive operator-valued measure).

\paragraph{Probability aggregation.}
Since classification requires $K$ output classes, the $2^{n_q}$ computational basis states are partitioned into $K$ groups according to their index order (e.g., $0000, 0001, 0010, \dots$). Specifically, each class is assigned $\lfloor 2^{n_q}/K \rfloor$ consecutive basis states, and the remaining
\[
r = 2^{n_q} - \lfloor 2^{n_q}/K \rfloor \cdot K, \qquad r \le K
\]
basis states are distributed one by one to the first $r$ classes. Let $\mathcal{S}_k$ denote the set of basis states assigned to class $k$. The aggregated probability for class $k$ is then defined as
\begin{equation}
\label{eq:aggre_prob}
p_k = \frac{1}{|\mathcal{S}_k|} \sum_{\phi \in \mathcal{S}_k} P(\phi), \qquad k=1,\dots,K.
\end{equation}
The final prediction is given by
\begin{equation}
\hat{y} = \arg\max_{k} p_k.
\end{equation}
Note that with this design, although the full probability distribution is required during the training stage, at inference a single measured bitstring can be directly mapped to a class label according to the pre-defined partition of basis states. This property is the central mechanism that enables Yomo to perform accurate classification in the single-shot regime.

\paragraph{Loss functions.}
To stabilize training and encourage confident predictions, the Yomo loss function combines three components:
\begin{equation}
\label{eq:loss}
\mathcal{L}_{\text{yomo}} \;=\; \mathcal{L}_{\text{CE}} \;+\; \gamma \,\mathcal{L}_{\text{PS}} \;+\; \omega \,\mathcal{L}_{\text{E}},
\end{equation}

where $\mathcal{L}_{\text{CE}} = -\frac{1}{N_s}\sum_{i=1}^{N_s} y_i\log p_i$ is the standard cross-entropy loss with $N_s$ samples,  $y_i$ and $p_i$ denote the true label and predicted probability of the correct class for sample $i$, $\mathcal{L}_{\text{PS}}$ is a sharpening loss, and $\mathcal{L}_{\text{E}}$ enforces low-entropy distributions.
The probability sharpening mechanism rewards predictions whose probability $p_i$ (corresponding to prediction $\hat{y}$) with data sample index $i$ surpasses a threshold $\tau \in (0,1)$:
\begin{equation}
\label{eq:L_PS}
\mathcal{L}_{\text{PS}} \;=\; 1 - \frac{1}{|\{ i \,|\, p_i > \tau \}|} \sum_{i:p_i > \tau} p_i .
\end{equation}

This term encourages the model to push confident predictions further toward one-hot distributions. {As one can observe, if there are no prediction ${p}_i$ larger than $\tau$, then $\mathcal{L}_{\text{PS}}$ is 1, and $\mathcal{L}_{\text{PS}}$ will be close to 0 if ${p}_i > \tau$ and ${p}_i \rightarrow 1$.}
In addition, we introduce an entropy regularization term
\begin{equation}
\mathcal{L}_{\text{E}} \;=\; - \frac{1}{N_s}\sum_{i=1}^{N_s} p_i \log p_i ,
\end{equation}
which penalizes flat probability distributions and promotes sharper decision boundaries. The hyperparameters $\gamma$ and $\omega$ control the relative strengths of sharpening and entropy regularization.

\section{Theoretical Results}
\label{sec:theorem}
We summarize the main theoretical findings supporting the proposed Yomo framework.
Complete derivations and proofs are provided in Appendix~\ref{appendix:shot_complexity}.

\begin{theorem}[Shot requirement of Yomo]
Let $p > \frac12$ denote the probability that a single-shot measurement yields the correct class in the trained Yomo model.
To achieve $\Pr(\text{incorrect}) \le \delta$, with $\Pr(\text{incorrect}) \;\le\; \exp\!\big(-2N(p-\tfrac12)^2\big)$, a sufficient shot budget $N_{\text{yo}}$ is:
\begin{equation}
\label{eq:theo1}
N_{\text{yo}} \;\ge\; \dfrac{\ln(1/\delta)}{2\,(p-\tfrac12)^2}\; 
\end{equation}
Here $\Pr(\text{incorrect})$ denotes the misclassification probability under finite-shot sampling, and $\delta \in (0,1)$ is the target error tolerance. (For even N, assume adversarial tie-break; random ties change only constants.)
\end{theorem}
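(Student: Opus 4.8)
The plan is to reduce multi-shot inference to a binary majority-vote problem and then apply Hoeffding's inequality. First I would model the $N$ independent measurement shots: let $X_i$ be the indicator that the $i$-th measured bitstring falls into the bin $\mathcal{S}_{y^*}$ of the true class $y^*$, so that $X_1, \dots, X_N$ are i.i.d. Bernoulli variables with $\Pr(X_i = 1) = p = \sum_{\phi \in \mathcal{S}_{y^*}} P(\phi)$, and write $S_N = \sum_{i=1}^N X_i$ for the number of shots voting for the correct class. By hypothesis $p > \tfrac12$, so $\E[S_N] = Np > N/2$, which is the gap the concentration argument will exploit.

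The key reduction is to observe that the aggregated majority-vote prediction is correct whenever the true class receives a strict majority of the $N$ votes, i.e. whenever $S_N > N/2$. This holds because the $N$ votes are distributed among $K$ classes, and if a single class holds more than half of them, no competing class can match or exceed its count. This one observation collapses the multiclass decision into the binary event $\{S_N > N/2\}$, giving $\Pr(\text{incorrect}) \le \Pr(S_N \le N/2)$; I would absorb the boundary value $S_N = N/2$ (possible only for even $N$) into the error via the stated adversarial tie-break, which keeps the inequality conservative.

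Next I would apply Hoeffding's inequality to the bounded variables $X_i \in [0,1]$. Rewriting the deviation as $\Pr(S_N \le N/2) = \Pr\!\big(S_N - Np \le -N(p - \tfrac12)\big)$ and invoking the one-sided bound $\Pr(S_N - \E[S_N] \le -t) \le \exp(-2t^2/N)$ with $t = N(p-\tfrac12)$ yields exactly $\Pr(\text{incorrect}) \le \exp\!\big(-2N(p-\tfrac12)^2\big)$, the stated concentration estimate. Finally, to extract the shot budget I would solve $\exp\!\big(-2N(p-\tfrac12)^2\big) \le \delta$, which is equivalent to $2N(p-\tfrac12)^2 \ge \ln(1/\delta)$, giving the sufficient condition $N_{\text{yo}} \ge \ln(1/\delta)/[2(p-\tfrac12)^2]$.

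The main obstacle is not the concentration step, which is routine once the problem is binarized, but rather justifying the reduction cleanly: one must confirm that a strict vote-majority for the true class is genuinely sufficient for a correct $\arg\max$ over the aggregated counts, and that ties contribute at most to constants. I expect the tie handling to be the only delicate point, since for even $N$ the event $S_N = N/2$ can coexist with a competing class also holding $N/2$ votes; the adversarial convention resolves this by counting such events as errors, whereas a random tie-break would merely improve the constant factor without altering the $\ln(1/\delta)/(p-\tfrac12)^2$ scaling.
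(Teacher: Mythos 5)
Your proposal matches the paper's own argument essentially step for step: model the correct-class votes as $S_N \sim \mathrm{Binomial}(N,p)$, bound the majority-vote failure event $\{S_N \le N/2\}$ via Hoeffding's one-sided tail to get $\exp\!\big(-2N(p-\tfrac12)^2\big)$, and invert to obtain the shot budget. Your added care in justifying that a strict majority for the true class forces the correct $\arg\max$, and in handling even-$N$ ties adversarially, only makes explicit what the paper leaves implicit.
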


\begin{theorem}[Shot requirement of Vanilla QML]
Let $\Delta$ be the minimum margin between the top-two class scores in the infinite-shot limit, $L$ be the Lipschitz constant of the score–expectation mapping, and $K$ the number of classes. To achieve $\Pr(\text{incorrect}) \le \delta$, with $\Pr(\text{incorrect}) \;\le\; 2K \exp\!\left( - 2N \,\left(\frac{\Delta}{4L}\right)^{2} \right)$, a sufficient shot budget $N_{\text{va}}$ is:
\begin{equation}
\label{eq:theo2}
N_{\text{va}} \;\ge\; \dfrac{8L^2}{\Delta^2}\,\ln\!\dfrac{2K}{\delta}
\end{equation}
\end{theorem}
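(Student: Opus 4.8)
The plan is to prove Theorem 2 by reducing the classification-correctness event to a statement about the accuracy of the estimated expectation values, then applying Hoeffding's inequality together with a union bound. First I would observe that because the softmax map is strictly monotone, the predicted label $\hat y = \arg\max_k p_k$ coincides with $\arg\max_k s_k$, where $s_k$ denotes the class score (in the plain Vanilla model $s_k = \mu_k = \langle\psi|O_k|\psi\rangle$, corresponding to $L=1$). Hence the prediction is correct precisely when the estimated top score exceeds those of all competitors, and the entire analysis can be carried out at the level of scores rather than probabilities.

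Next I would establish a sufficient accuracy condition. Let $a$ be the true top class, so that every competitor satisfies $s_a - s_j \ge \Delta$. Writing $\hat\mu_k$ for the $N$-shot empirical estimate of $\mu_k$ and $\hat s_k$ for the induced score, the Lipschitz property of the score–expectation mapping gives $|\hat s_k - s_k| \le L\,|\hat\mu_k - \mu_k|$. I would then show that if every expectation estimate obeys $|\hat\mu_k - \mu_k| < \Delta/(2L)$, every score deviates by less than $\Delta/2$, so that for each competitor $j$,
\begin{equation}
\hat s_a - \hat s_j \;\ge\; (s_a - s_j) - |\hat s_a - s_a| - |\hat s_j - s_j| \;>\; \Delta - \tfrac{\Delta}{2} - \tfrac{\Delta}{2} \;=\; 0 .
\end{equation}
Thus the ordering of the top two classes — and hence the prediction — is preserved whenever all $K$ expectation estimates lie within tolerance $t = \Delta/(2L)$.

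I would then bound the complementary failure event. Since each $O_k$ has eigenvalues in $[-1,1]$, every single-shot outcome lies in an interval of width $2$, so Hoeffding's inequality gives $\Pr(|\hat\mu_k - \mu_k| \ge t) \le 2\exp(-2Nt^2/4) = 2\exp(-Nt^2/2)$ for each $k$. A union bound over the $K$ classes then yields
\begin{equation}
\Pr(\text{incorrect}) \;\le\; \sum_{k=1}^{K}\Pr\!\big(|\hat\mu_k-\mu_k|\ge t\big) \;\le\; 2K\exp\!\Big(-\tfrac{N t^2}{2}\Big) \;=\; 2K\exp\!\Big(-2N\big(\tfrac{\Delta}{4L}\big)^2\Big),
\end{equation}
where the last equality substitutes $t=\Delta/(2L)$ and regroups constants via $\tfrac{t^2}{2}=2(\tfrac{\Delta}{4L})^2$. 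Imposing this bound $\le \delta$ and solving $2N(\Delta/4L)^2 \ge \ln(2K/\delta)$ gives the claimed budget $N_{\text{va}} \ge \tfrac{8L^2}{\Delta^2}\ln\tfrac{2K}{\delta}$.

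The main obstacle I anticipate is bookkeeping the constants so that the factor $4L$ (rather than $2L$) emerges correctly: it arises from the interplay between the tolerance $\Delta/(2L)$ needed to protect against two simultaneously perturbed scores and the width-$2$ range entering Hoeffding's exponent, which together convert $t^2/2$ into $2(\Delta/4L)^2$. A secondary subtlety is justifying that controlling all $K$ expectations, rather than only the top two, is harmless — it merely loosens the union-bound prefactor to $2K$ — and that per-coordinate Lipschitz control is the intended reading of the score–expectation mapping; if the scores depend jointly on all expectations one instead invokes the Lipschitz constant in the appropriate norm, which changes only the definition of $L$ and not the structure of the argument.
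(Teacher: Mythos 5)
Your proposal is correct and follows essentially the same route as the paper: Lipschitz-based argmax stability under the margin $\Delta$, per-class Hoeffding concentration of $\hat\mu_k$, and a union bound over the $K$ observables, then inversion to get the shot budget. The only difference is constant bookkeeping: the paper demands the tighter tolerance $\|\hat{\boldsymbol{\mu}}-\boldsymbol{\mu}\|_\infty\le\Delta/(4L)$ and invokes Hoeffding in the form $2\exp(-2N\varepsilon^2)$, whereas you use the looser (and still sufficient) tolerance $\Delta/(2L)$ together with the range-$[-1,1]$ normalization $2\exp(-Nt^2/2)$ --- which is actually the correct Hoeffding constant for width-2 outcomes --- and the two accountings happen to produce the identical final bound $2K\exp\!\big(-2N(\Delta/4L)^2\big)$.
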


\begin{theorem}[Condition for fewer shots at fixed $\delta$]
Fix $\delta$. If the trained Yomo model achieves
\begin{equation}
p \;\ge\; \frac12 + \frac{\Delta}{4L} \sqrt{ \frac{\ln(1/\delta)}{\ln(2K/\delta)} },
\end{equation}
then Yomo requires fewer measurement shots than Vanilla QML to reach the same target error probability $\delta$.
\end{theorem}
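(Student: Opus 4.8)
The plan is to reduce the claim to a direct comparison of the two sufficient shot budgets furnished by the preceding theorems and then to solve the resulting inequality for $p$. The phrase ``Yomo requires fewer shots'' should be read as the statement that the smallest shot count guaranteed by the Yomo bound in \eqref{eq:theo1} to achieve error at most $\delta$ does not exceed the count guaranteed by the Vanilla bound in \eqref{eq:theo2}. Since both budgets are thresholds of the form $N \ge (\cdots)$, it suffices to compare the two threshold values
\[
N_{\text{yo}} \;=\; \frac{\ln(1/\delta)}{2\,(p-\tfrac12)^2}, \qquad N_{\text{va}} \;=\; \frac{8L^2}{\Delta^2}\,\ln\frac{2K}{\delta},
\]
and to show that the hypothesis on $p$ is exactly the condition $N_{\text{yo}} \le N_{\text{va}}$.

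First I would write down $N_{\text{yo}} \le N_{\text{va}}$ using these two expressions. Because $p > \tfrac12$ by assumption, the factor $(p-\tfrac12)^2$ is strictly positive, so I may cross-multiply without reversing the inequality; isolating $(p-\tfrac12)^2$ then gives
\[
(p-\tfrac12)^2 \;\ge\; \frac{\Delta^2}{16L^2}\cdot\frac{\ln(1/\delta)}{\ln(2K/\delta)}.
\]
Taking positive square roots, which is legitimate since $p-\tfrac12 > 0$ and since $\delta \in (0,1)$ with $K \ge 1$ makes both logarithms positive, yields precisely
\[
p \;\ge\; \frac12 + \frac{\Delta}{4L}\sqrt{\frac{\ln(1/\delta)}{\ln(2K/\delta)}},
\]
the stated condition. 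As every step is an equivalence on the domain $p > \tfrac12$, the hypothesis is in fact necessary and sufficient for $N_{\text{yo}} \le N_{\text{va}}$, not merely sufficient.

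I do not anticipate a genuine analytic obstacle: the argument is an algebraic rearrangement of bounds already established, so the work lies in bookkeeping rather than in any new estimate. The points meriting care are the following. First, because the two quantities are themselves sufficient upper bounds on the true minimal shot counts, I would state explicitly that the conclusion compares these guaranteed budgets and does not assert optimality of either expression. Second, the monotonicity direction must be tracked: a larger confidence margin $p - \tfrac12$ decreases $N_{\text{yo}}$, so the comparison yields a lower bound on $p$ rather than an upper one, and writing the chain of equivalences explicitly guards against an inverted inequality. Finally, if one insists on integer shot counts $\lceil N_{\text{yo}} \rceil$ and $\lceil N_{\text{va}} \rceil$ rather than the real-valued thresholds, I would observe that the ceiling function is monotone and therefore preserves the inequality, so no additional hypothesis is required.
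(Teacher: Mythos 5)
Your proposal is correct and follows essentially the same route as the paper's proof in Appendix~\ref{appendix:shot_complexity}: equate the two sufficient shot budgets from Eq.~\ref{eq:theo1} and Eq.~\ref{eq:theo2}, isolate $(p-\tfrac12)^2$, and take positive square roots. Your additional remarks on the equivalence holding over $p>\tfrac12$, on comparing guaranteed budgets rather than true minima, and on integer shot counts are sensible refinements but do not change the argument.
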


\begin{theorem}[Condition for smaller $\delta$ at fixed $N$]
Fix $N$. If the trained Yomo model achieves
\begin{equation}
p \;\ge\; \frac12 + \sqrt{ \left(\frac{\Delta}{4L}\right)^{2} - \frac{\ln(2K)}{2N} },
\end{equation}
then Yomo attains a smaller incorrect probability $\delta$ than Vanilla QML with the same shot budget $N$.

\end{theorem}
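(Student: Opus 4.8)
The plan is to compare, at a fixed shot budget $N$, the two finite-shot misclassification \emph{upper bounds} already derived in the preceding two theorems: the single-shot (Yomo) bound $\delta_{\text{yo}}(N) = \exp(-2N(p-\tfrac12)^2)$ and the expectation-based (Vanilla) bound $\delta_{\text{va}}(N) = 2K\exp(-2N(\tfrac{\Delta}{4L})^2)$. Since each is a guaranteed ceiling on the respective true error probability, it suffices to exhibit the parameter regime in which the Yomo ceiling lies at or below the Vanilla ceiling; whenever $\delta_{\text{yo}}(N) \le \delta_{\text{va}}(N)$, Yomo certifiably attains the smaller target $\delta$ at the same $N$. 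First I would write down the defining inequality $\exp(-2N(p-\tfrac12)^2) \le 2K\exp(-2N(\tfrac{\Delta}{4L})^2)$ and take logarithms of both sides, which is legitimate because $\ln$ is strictly increasing and both sides are positive.

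Next I would rearrange the resulting affine inequality in the exponents. Taking logs gives $-2N(p-\tfrac12)^2 \le \ln(2K) - 2N(\tfrac{\Delta}{4L})^2$; collecting the two quadratic terms on one side and dividing through by the positive factor $2N$ isolates $(p-\tfrac12)^2 \ge (\tfrac{\Delta}{4L})^2 - \tfrac{\ln(2K)}{2N}$. The final step is to take square roots. Because the trained model satisfies $p > \tfrac12$ by hypothesis, the left-hand side has a well-defined nonnegative square root $p-\tfrac12$, and taking the positive root of both sides yields exactly $p \ge \tfrac12 + \sqrt{(\tfrac{\Delta}{4L})^2 - \tfrac{\ln(2K)}{2N}}$, the claimed condition.

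The one genuine subtlety, and the place I would be most careful, is the sign of the radicand $(\tfrac{\Delta}{4L})^2 - \tfrac{\ln(2K)}{2N}$. When $N$ is small relative to $\ln(2K)$ this quantity can be negative, so the square-root step is not literally reversible; however, the inequality $(p-\tfrac12)^2 \ge (\tfrac{\Delta}{4L})^2 - \tfrac{\ln(2K)}{2N}$ then holds automatically, since its left side is nonnegative, so the stated bound is vacuously (and correctly) satisfied and Yomo wins unconditionally. I would state this explicitly, reading the square root of a negative radicand as the convention that the condition imposes no real constraint. A secondary point worth flagging is that the comparison is between \emph{sufficient} upper bounds rather than the exact error probabilities, so the theorem certifies a guaranteed advantage region rather than characterizing the true crossover; I would note this to avoid overclaiming.
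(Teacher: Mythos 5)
Your proof is correct and follows essentially the same route as the paper's: compare the two exponential error bounds at fixed $N$, take logarithms, rearrange to $(p-\tfrac12)^2 \ge (\Delta/4L)^2 - \ln(2K)/(2N)$, and take square roots, with the same caveat about the radicand possibly being negative (which the paper handles with a brief parenthetical). Your additional remark that the theorem compares sufficient upper bounds rather than exact error probabilities is a fair and worthwhile clarification, but does not change the argument.
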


\begin{theorem}[Single-shot condition]
In the $N=1$ regime, if the trained Yomo model achieves
\begin{equation}
p \;\ge\; 1 \;-\; 2K\,\exp\!\left( - \frac{\Delta^{2}}{8L^{2}} \right),
\end{equation}
then its incorrect probability $\delta$ is guaranteed to be smaller than that of Vanilla QML.
\end{theorem}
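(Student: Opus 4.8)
The plan is to derive this as a direct specialization of the two shot-requirement theorems to $N=1$, with one crucial observation: in the single-shot regime the Yomo misclassification probability is \emph{exact} rather than merely Hoeffding-bounded. First I would note that when only one shot is taken there is no majority vote to analyze---the lone measured bitstring is mapped to a class and, by the very definition of $p$, falls in the correct class with probability exactly $p$. Hence the Yomo error is $\delta_{\text{yo}} = 1-p$ with no slack. This is precisely why the single-shot case merits its own statement: using the exact value $1-p$ rather than the bound $\exp(-2(p-\tfrac12)^2)$ that one would get by setting $N=1$ in the first theorem yields a weaker, and thus more easily satisfiable, condition on $p$. The gap is substantial when $p$ is close to $1$, since $1-p < \exp(-2(p-\tfrac12)^2)$ throughout $(\tfrac12,1)$.

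Next I would specialize the Vanilla guarantee. Setting $N=1$ in the bound $2K\exp(-2N(\tfrac{\Delta}{4L})^2)$ of the Vanilla theorem and simplifying the exponent via $2(\tfrac{\Delta}{4L})^2 = \tfrac{\Delta^2}{8L^2}$ gives the certified single-shot error level $\delta_{\text{va}} \le 2K\exp(-\tfrac{\Delta^2}{8L^2})$. Since no closed form for the Vanilla error is available---only this concentration certificate---it is the natural benchmark against which to measure Yomo.

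The comparison then collapses to a single inequality. To guarantee that Yomo's exact single-shot error does not exceed the Vanilla certificate it suffices to demand $1-p \le 2K\exp(-\tfrac{\Delta^2}{8L^2})$, which rearranges immediately into the stated hypothesis $p \ge 1 - 2K\exp(-\tfrac{\Delta^2}{8L^2})$. Under this hypothesis $\delta_{\text{yo}} = 1-p \le 2K\exp(-\tfrac{\Delta^2}{8L^2})$, so the Yomo single-shot error is no larger than the guaranteed Vanilla single-shot error, as claimed. I expect the only real subtlety---rather than any computation---to be the interpretive one: making explicit that we are comparing Yomo's \emph{exact} error against Vanilla's Hoeffding \emph{upper bound}, and arguing that this is the appropriate and fair comparison given that the Vanilla error is not otherwise accessible in closed form. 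A secondary point worth a sentence is confirming that the Hoeffding bound remains valid (if loose) at $N=1$, so that substituting $N=1$ into the second theorem is legitimate.
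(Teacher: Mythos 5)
Your proposal is correct and follows essentially the same route as the paper's proof: take Yomo's exact single-shot error $\delta_{\text{yo}}=1-p$, specialize the Vanilla Hoeffding certificate to $N=1$ to get $\delta_{\text{va}}\le 2K\exp(-\Delta^2/(8L^2))$, and rearrange $1-p\le\delta_{\text{va}}$ into the stated condition. The additional remarks you make (why the exact error beats the $N=1$ Hoeffding bound, and that the comparison is against Vanilla's upper bound rather than its true error) are sound and, if anything, more explicit than the paper's own two-line argument.
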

The above results reveal that Vanilla is fundamentally disadvantaged in two ways.
First, its bound carries a multiplicative $2K$ factor from the union bound over $K$ classes, which directly inflates the shot requirement.
Second, its dependence on the top-two score margin $\Delta$ means that if $\Delta$ shrinks, often exponentially with qubit count or under noise \citep{mcclean2018barren}, then the required $N_{\text{va}}$ in Eq.~\ref{eq:theo2} grows exponentially as well.
In contrast, Yomo’s requirement in Eq.~\ref{eq:theo1} depends only on $p-\tfrac12$, determined by training, which can remain stable as qubit count increases, enabling Yomo to sustain low-shot performance even in high-dimensional regimes where Vanilla QML becomes impractical during inference.

\section{Experiments}

We validate the effectiveness of Yomo by comparing it against Vanilla QML models. Hyperparameter and training settings for all experiments are provided in Appendix~\ref{sec:hyperp}. Reported results are averaged over 5 runs with different random seeds to ensure robustness. 

\begin{figure}[h]
\centering
\includegraphics[width=\linewidth]{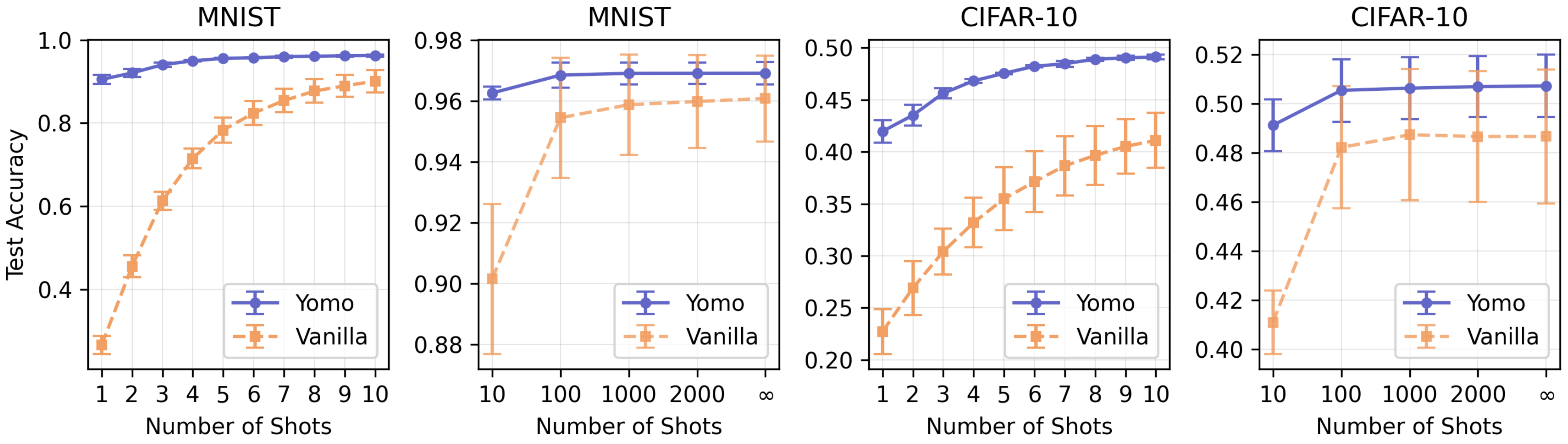}
\caption{
Comparison of test accuracy between Yomo (blue, solid line) and Vanilla (orange, dashed line) models across different shot budgets on MNIST (left two panels) and CIFAR-10 (right two panels). Both models use $n_q = 4$ qubits and $N_b = 5$ QNN blocks. For Yomo, the threshold parameter is fixed at $\tau = 0.6$.
}
 \label{fig:result_1}
\end{figure}
\paragraph{Single-shot behavior of Yomo.}
We first evaluate Yomo and Vanilla on MNIST and CIFAR-10 classification tasks under varying shot budgets. With the network structure fixed as described in Sec.~\ref{sec:yomo}, these experiments directly assess the contribution of Yomo’s probability-based output and loss design. As shown in Fig.~\ref{fig:result_1}, under the single-shot regime ($\text{shots}=1$) on MNIST, Vanilla achieves only 26.59\% test accuracy, whereas Yomo attains 90.52\%. Increasing the number of shots substantially improves Vanilla: at $\text{shots}=10$, its accuracy reaches 90.08\%, comparable to Yomo’s single-shot performance. This indicates that Yomo can achieve the same level of accuracy with roughly $10\times$ fewer shots. Notably, Yomo also continues to improve as the shot budget increases.  
In Fig.~\ref{fig:result_1}, we further extend the evaluation up to $\text{shots} \to \infty$, corresponding to exact state-vector simulation. Across the full shot regime, Yomo consistently outperforms Vanilla. Similar trends are observed on the more challenging CIFAR-10 task, where Yomo maintains its advantage in both the low- and high-shot settings.

\paragraph{Effects of qubit count.}
As discussed in Sec.~\ref{sec:theorem}, the required number of shots for Vanilla grows with decreasing top-two score margin $\Delta$, which typically shrinks as the system size increases. The sufficient budget scales as $N_{\text{va}} \;\ge\; O(1/\Delta^2)$, implying severe shot requirements for larger $n_q$. In contrast, Yomo is not constrained by this dependence. To verify this, we compare both models with increasing qubit counts, $n_q \in \{4,6,8,10,12\}$, while fixing $N_b = 5$. Results in Fig.~\ref{fig:result_2}(a,b) clearly show that Vanilla suffers significant performance degradation as $n_q$ increases. Although larger qubit counts introduce more trainable parameters, Vanilla would require deeper circuits (larger $N_b$) to maintain expressivity. On the other hand, larger $n_q$ can also allow shallower input encodings for a fixed number of features, which is advantageous in the noisy intermediate-scale quantum (NISQ) \citep{preskill2018quantum} regime. Yomo, however, shows no comparable performance decay as $n_q$ grows, consistent with our theoretical findings in Sec.~\ref{sec:theorem}.

\paragraph{Is threshold $\tau$ important?}
During training, Yomo employs a probability sharpening mechanism (Eq.~\ref{eq:L_PS}). Intuitively, setting $\tau$ too low may amplify incorrect predictions early in training, while setting it too high primarily enhances already confident predictions, providing limited benefit. Hence, an intermediate threshold is expected to be most effective. Figure~\ref{fig:result_2}(c) shows test accuracy across different $\tau$ values and shot budgets. While results for $\text{shots} > 10$ are relatively insensitive to $\tau$, in the single-shot regime the accuracy peaks at $\tau=0.6$, confirming this moderate choice as optimal.

\paragraph{Probability sharpening mechanism.}
Figures~\ref{fig:result_2}(d,e) compare training dynamics of Yomo with and without the sharpening loss $\mathcal{L}_{\text{PS}}$ (Eq.~\ref{eq:L_PS}). The inclusion of $\mathcal{L}_{\text{PS}}$ clearly improves single-shot test accuracy, as shown in Fig.~\ref{fig:result_2}(e), validating the effectiveness of this mechanism in guiding the model toward more confident and accurate predictions. 

\begin{figure}[ht]
\centering
\includegraphics[width=\linewidth]{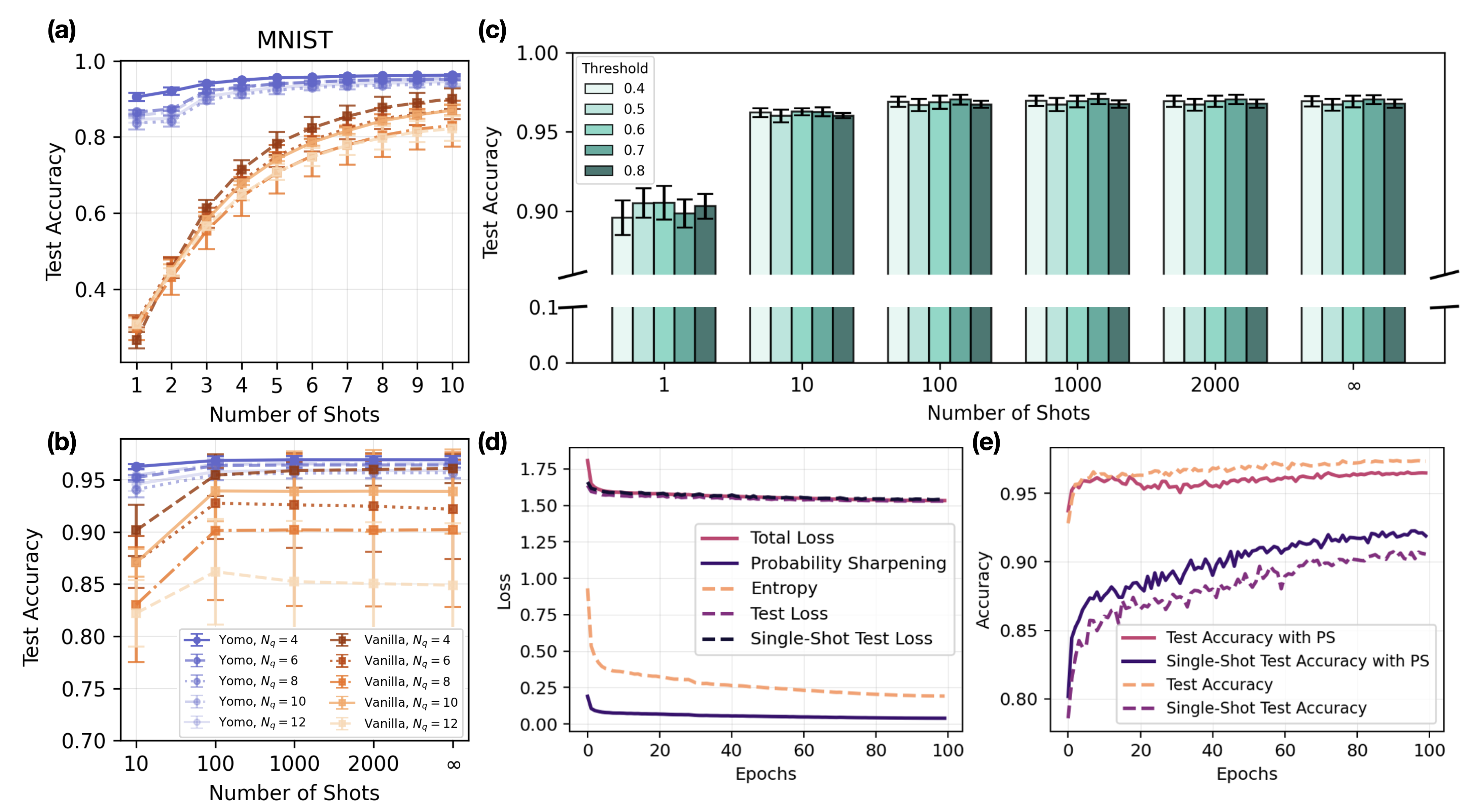}
\caption{
Extended evaluation of Yomo and Vanilla models.
(a, b) Effect of qubit count $n_q \in {4,6,8,10,12}$ on MNIST accuracy. 
(c) Sensitivity to threshold $\tau \in {0.4,0.5,0.6,0.7,0.8}$. While performance is similar for $\text{shots}>10$, single-shot accuracy peaks at $\tau=0.6$.
(d) Training loss decomposition of Yomo into total loss, probability sharpening (PS) loss, entropy loss, and test losses.
(e) Test accuracy trajectories with and without the sharpening loss $\mathcal{L}_{\text{PS}}$ ($\tau=0.6$).}
 \label{fig:result_2}
\end{figure}

\paragraph{Noisy simulation with deeper QNN.}

Since Yomo targets shot-efficient inference, it is crucial to test its reliability under realistic NISQ noise. To this end, we approximate hardware noise using depolarizing channels applied to both single-qubit (1Q) and two-qubit (2Q) operations (details provided in Appendix~\ref{sec:appendix_noisy_simu}). Noise model approximations for different quantum hardware platforms are constructed by mapping publicly reported 1Q and 2Q error rates to depolarizing error probabilities, as summarized in Table~\ref{tab:noise_reference}. Figs.~\ref{fig:result_3} and \ref{fig:result_4} present noisy simulation results for Yomo and Vanilla on MNIST and CIFAR-10, respectively, evaluated across shot budgets $N_{\text{shot}} \in \{1,100,\infty\}$ and different numbers of QNN blocks. The simulated hardware noise model includes Quantinuum H1-1, IBM\_Pittsburgh, Google Willow, and IonQ Forte. Among these, Quantinuum H1-1 exhibits performance closest to the noiseless baseline, followed by IBM\_Pittsburgh, Google Willow, and IonQ Forte. This ordering mirrors their reported 2Q error rates, indicating that as circuit depth increases, the 2Q error rate becomes the dominant factor governing overall model accuracy.  

Figs.~\ref{fig:result_3} and \ref{fig:result_4} further reveal a clear contrast between Vanilla and Yomo in terms of depth–performance behavior under noisy conditions. For Vanilla, at sufficiently large shot budgets ($N_{\text{shot}}=100,\infty$), test accuracy initially increases with the number of QNN blocks, reaching a sweet spot around 10–15 blocks before degrading as noise accumulates. This indicates that, when enough measurement precision is available, additional expressiveness from deeper circuits can momentarily outweigh the effects of noise. In contrast, Yomo already achieves strong accuracy at very low depth ($N_b=5$), leaving little room for further improvement. As a result, deeper circuits do not provide additional benefit, and performance decreases monotonically due to noise accumulation. This distinction highlights a fundamental difference: while Vanilla relies on deeper circuits and larger shot budgets to exploit expressiveness, Yomo is suitable for low-depth, shot-efficient inference. Moreover, Yomo remains robust on noisy settings, with performance in some hardware configurations (e.g., Quantinuum H1-1) closely tracking the noiseless baseline. Even in the single-shot regime, Yomo matches the accuracy of Vanilla models that require orders of magnitude more measurements, representing its practical advantage in both runtime and hardware cost.

\begin{figure}[h]
\centering
\includegraphics[width=\linewidth]{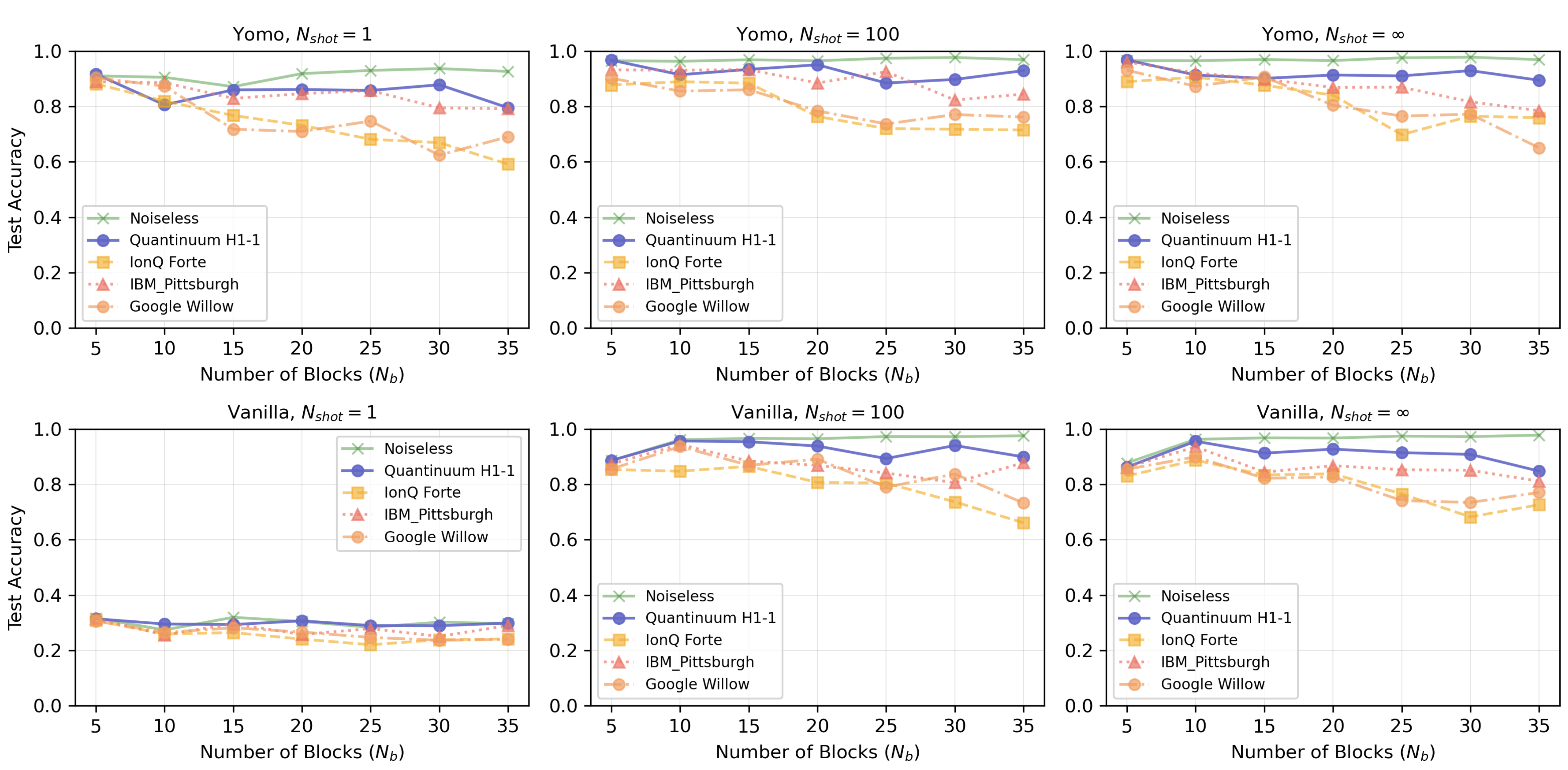}
\caption{Comparison of Yomo and Vanilla QML models on MNIST under different hardware noise settings and shot budgets. Top row: Yomo with $N_{\text{shot}} \in \{1, 100, \infty\}$. Bottom row: Vanilla with $N_{\text{shot}} \in \{1, 100, \infty\}$. Each curve shows test accuracy as a function of the number of circuit blocks $N_b$ under noiseless simulation and depolarizing noise models parameterized by hardware benchmarks from Quantinuum H1-1, IonQ Forte, IBM\_Pittsburgh, and Google Willow.
 }
 \label{fig:result_3}
\end{figure}

\begin{table}[t]
\centering
\caption{Depolarizing Noise Level Reference \citep{ibmQuantumComputeResources, googleWillowSpecSheet, quantinuumH, ionqForteErrorRates}. We note that the IBM Quantum platform's error rates fluctuate over time. The data presented was recorded on August 28, 2025. For IBM, the 1Q error rates reported are median values, while the 2Q error rates are average values. For all other providers, the reported error rates are average values.}
\label{tab:noise_reference}
\fontsize{8pt}{10pt}\selectfont 
\begin{tabular}{lccc}
\toprule
 \textbf{Device} & \textbf{1-Qubit Error Rate} & \textbf{2-Qubit Error Rate} & \textbf{Approx. Depolarizing $p_1 / p_2$} \\
\midrule
 IBM\_Pittsburgh & $0.0202\%$ & $0.169\%$ & $p_1 \sim 2.02 \times 10^{-4}$,\; $p_2 \sim 1.69 \times 10^{-3}$ \\
 Google Willow & $0.035\%$  & $0.33\%$  & $p_1 \sim 3.5\times10^{-4}$,\; $p_2 \sim 3.3\times10^{-3}$ \\
Quantinuum H1-1 & $0.0018\%$ & $0.097\%$ & $p_1 \sim 1.8\times10^{-5}$,\; $p_2 \sim 9.7\times10^{-4}$ \\
 IonQ Forte & $0.02\%$ & $ 0.4\%$ & $p_1 \sim 2\times10^{-4}$,\; $p_2 \sim 4\times10^{-3}$ \\
\bottomrule
\end{tabular}
\end{table} 


\begin{figure}[h]
\centering
\includegraphics[width=\linewidth]{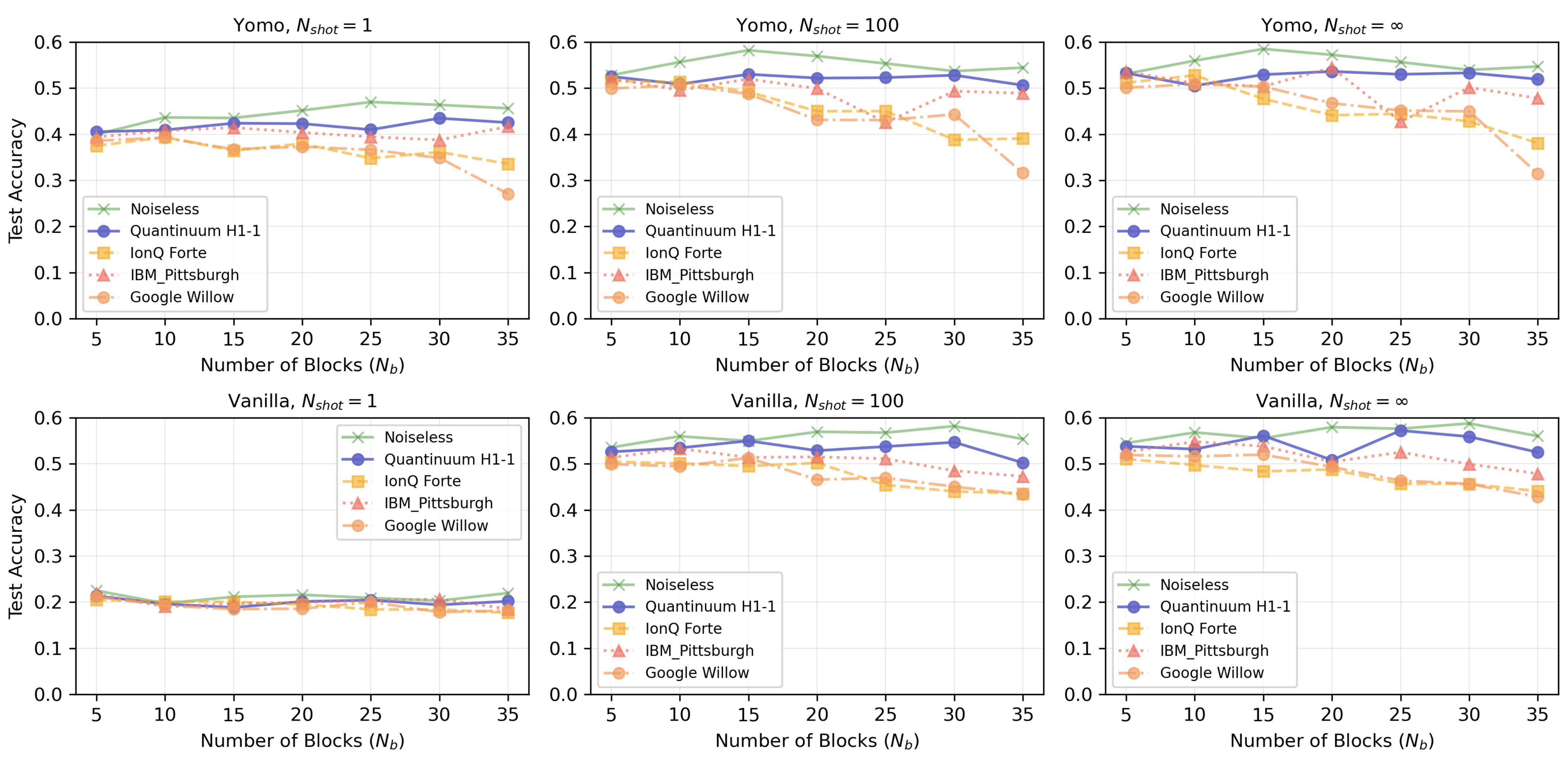}
\caption{
Comparison of Yomo and Vanilla QML models on CIFAR-10 under different hardware noise settings and shot budgets. Top row: Yomo with $N_{\text{shot}} \in \{1, 100, \infty\}$. Bottom row: Vanilla with $N_{\text{shot}} \in \{1, 100, \infty\}$. Each curve shows test accuracy as a function of the number of circuit blocks $N_b$ under noiseless simulation and depolarizing noise models parameterized by hardware benchmarks from Quantinuum H1-1, IonQ Forte, IBM\_Pittsburgh, and Google Willow.}
 \label{fig:result_4}
\end{figure}

\section{Discussion and Conclusion}

Our experiments demonstrate that Yomo achieves competitive or even superior performance with dramatically fewer measurement shots compared to Vanilla QML models. In some cases, Yomo attains high test accuracy with only a single shot. This has immediate implications for the practical use of quantum hardware. Since providers typically charge in proportion to the number of shots or runtime, reducing the required shots translates directly into lower usage costs. Conversely, under a fixed budget, users could conduct significantly more experiments or obtain higher-quality results. In this sense, Yomo contributes to lowering the economic barrier of adopting quantum technologies in both academic and industrial settings.

It is important to emphasize that Yomo is not intended to be trained directly on quantum hardware. Instead, its design is particularly well-suited to the setting where training is performed using classical simulation of quantum states, while deployment takes place on quantum devices. This separation leverages the flexibility of classical training environments, avoiding the substantial shot cost and noise challenges of on-hardware optimization. In the inference stage, however, Yomo’s single-shot capability enables efficient execution on real quantum processors. Notably, in the intermediate qubit regime (e.g., 25–35 qubits), quantum inference with Yomo may even surpass classical simulation in runtime, as suggested by \citep{chatterjee2025comprehensive}, due to the intrinsic efficiency of single-shot execution. A systematic investigation of this crossover point, which we leave for future work, could provide valuable guidance for determining when quantum inference becomes advantageous in practice.

Because training remains more efficient and practical on classical hardware in small qubit size, to scale up, an important future direction is to explore advanced classical methods for simulating QNN outputs. For example, the \emph{train-on-classical, deploy-on-quantum} paradigm \citep{duneau2024scalable, recio2025train} highlights the possibility of scalable classical training pipelines. Integrating such methods with shot-efficient inference schemes like Yomo may further reduce the total cost of deploying QML models.

Our inference-stage evaluation explicitly accounts for realistic practical constraints, finite measurement shots and noisy environment. We employed error models parameterized by 1Q and 2Q depolarizing noise derived from publicly available error rates. While these models cannot capture all device-specific imperfections, they offer a reasonable proxy for the effects of hardware noise. Importantly, our results show that Yomo maintains robust single-shot behavior even under these noise conditions. We note, however, that real devices such as IBM\_Pittsburgh or Google Willow have limited qubit connectivity, which would require additional SWAP gates compared to the fully connected ion-trap architectures of IonQ and Quantinuum. This connectivity overhead may further degrade performance in practice, suggesting that Yomo’s advantage could be even more pronounced on hardware with higher connectivity.

By enabling accurate single-shot inference, Yomo reduces costs of deploying QML, thereby making quantum models more accessible. Looking forward, combining Yomo with advances in classical simulation techniques, scaling analyses of the quantum-classical crossover regime, and device-aware optimizations will further advance the feasibility of practical QML deployment.

\subsubsection*{Acknowledgments}
We are grateful to Stephen Clark, Matteo Puviani and Enrico Rinaldi for insightful and invaluable discussions.

\clearpage
\bibliography{iclr2025_conference} 
\bibliographystyle{iclr2025_conference}
\clearpage
\appendix

\section{Motivation: The importance of Shot efficiency for Inference}

\begin{wrapfigure}{r}{0.45\textwidth}
\vspace{-10pt}
\centering
\includegraphics[width=0.7\linewidth]{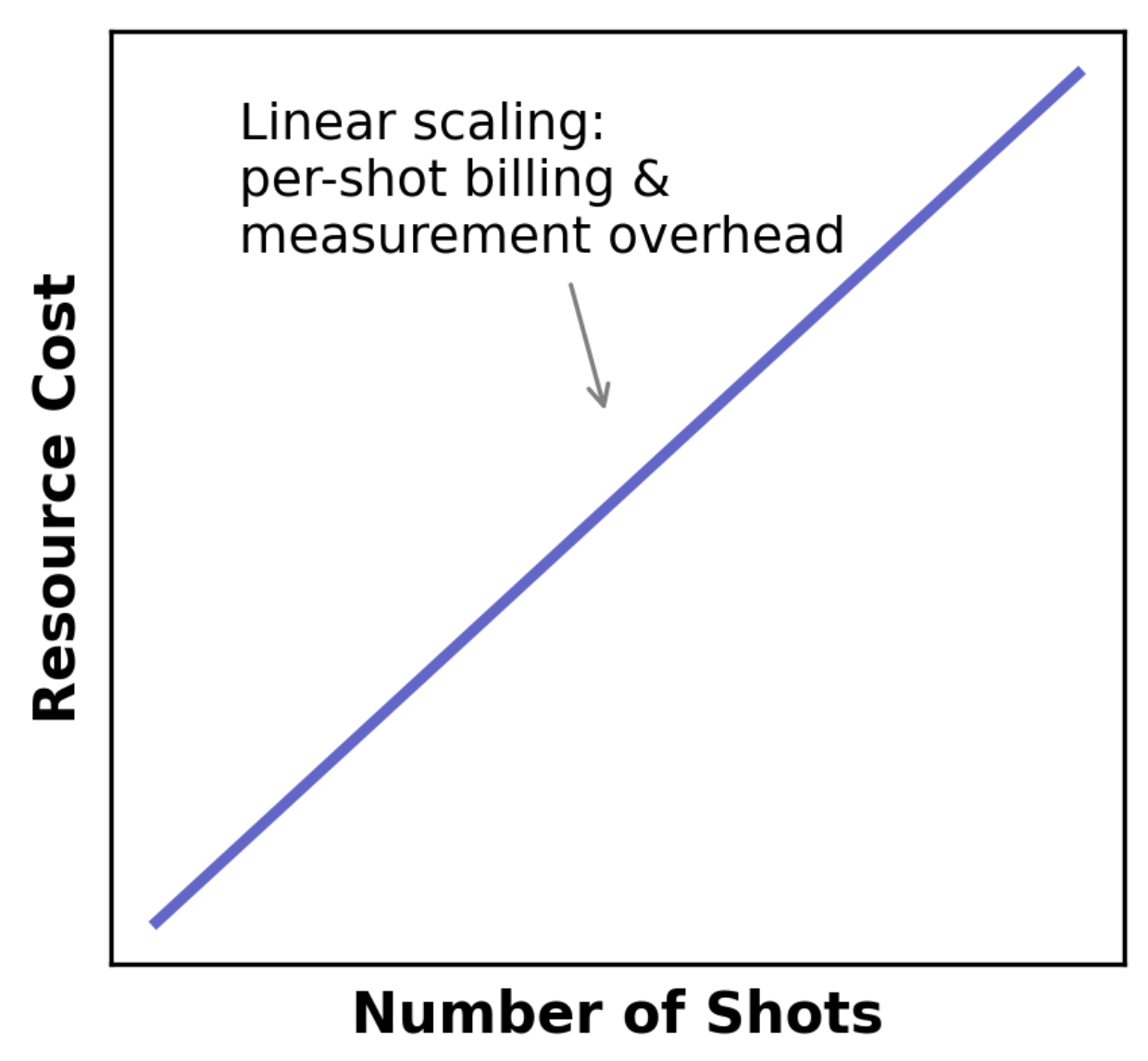}
\caption{Resource cost scales linearly with the number of measurement shots due to both per-shot billing and measurement overhead. }
\label{fig:cost_scaling} 
\end{wrapfigure}

Most prior research on QML has focused on improving the training phase, such as optimizing gradients or reducing the number of circuit evaluations required during parameter updates. However, scaling studies in classical machine learning have shown that, once a model is deployed at scale, the dominant cost often shifts from training to inference \citep{sardana2023beyond}. The same trend is expected for QML, such that when quantum hardware becomes routinely accessible, inference will constitute the primary driver of both computational and monetary cost. In such a setting, shot efficiency during inference becomes essential.
From a practical perspective, achieving competitive performance with a fraction of the measurement shots would offer a decisive advantage. As illustrated in Fig.~\ref{fig:cost_scaling}, if the target accuracy can be reached using only a handful of shots, the savings in hardware usage grow proportionally with the reduction in shots\footnote{Quantum hardware providers typically charge in proportion to the number of measurement shots or to execution time, which itself scales linearly with shots. For example, the IBM Quantum Platform pricing page (\url{https://www.ibm.com/quantum/products}) and IonQ on AWS Braket pricing page (\url{https://aws.amazon.com/braket/pricing/}).}. This means that, under a fixed quantum computing budget, a researcher or practitioner could run many more experiments, accelerating scientific progress and enabling broader adoption in industrial applications. Conversely, for a fixed workload, the overall inference cost could be reduced by orders of magnitude. In both scenarios, shot-efficient inference directly lowers the barrier to practical deployment of QML.

\section{Expanded Related Works}
\label{app:related}
\paragraph{Shot-efficient estimation methods.}  
A prominent line of work focuses on minimizing the number of circuit executions required to extract useful information. For instance, the classical shadow framework \citep{huang2020predicting} reuses measurement data to predict many observables simultaneously. While highly effective for general quantum state tomography, these approaches are not specifically designed for QML tasks, where the objective is to train and deploy predictive models efficiently.

\paragraph{Shot-efficient QML.}  
Within the QML literature, shot optimization has primarily been investigated in the context of \emph{training}. Several works propose adaptive or distribution-aware strategies to allocate shots during training epochs \citep{phalak2023shot, liang2024artificial}, thereby accelerating convergence while preserving accuracy. Although these methods demonstrate that judicious shot allocation can substantially reduce training cost, the models still rely on expectation-value outputs at inference time. As such, they do not directly address the cost of deployment, where inference calls may dominate the lifecycle usage of a machine learning model.  
More recently, \citet{recio2025single} introduced the concept of \emph{single-shot QML}, providing a theoretical characterization of when a QML model can achieve reliable predictions with only a single measurement. Their work highlights both the promise and the difficulty of realizing single-shot models in practice. While the potential cost savings are significant, training such models directly is shown to be challenging. Despite the importance of this direction, there has been limited follow-up work, largely due to the absence of a concrete architectural design or implementation pathway.

\paragraph{Training on classical hardware.}  
Complementary to these lines of research, other efforts have investigated hybrid training settings such as \emph{train-on-classical, deploy-on-quantum}, where models are trained using classical simulations and then deployed on real quantum hardware \citep{duneau2024scalable, recio2025train}. These works primarily address the training bottleneck imposed by scarce quantum resources and the challenges of gradient evaluation. In contrast, our work focuses on the inference stage. Nevertheless, the \emph{train-on-classical} paradigm offers a promising pathway for scaling up the models proposed here in future work.

\section{Theoretical Results on Inference Shot Requirement}
\label{appendix:shot_complexity}

In the main paper, we stated several theoretical guarantees on the shot requirements for inference in both expectation-based (Vanilla) QML and probability-aggregation (Yomo) QML. For completeness, we provide the detailed proofs here.

\paragraph{Vanilla QML.}
Expectation-based QML models produce predictions by computing class scores $s_c = g_c(\boldsymbol{\mu})$, where $\boldsymbol{\mu}=(\mu_1,\dots,\mu_K)$ are expectation values $\mu_c = \langle O_c\rangle$ of class observables $\{O_c\}$. The predicted label is then given by $\arg\max_c \mathrm{softmax}(s_c)$. We adopt the following assumptions:  
\begin{itemize}
    \item \textbf{Bounded outcomes.} Each single-shot outcome used to estimate $\mu_c$ lies in $[-1,1]$ (e.g., Pauli eigenvalues $\pm 1$).
    \item \textbf{Lipschitz scores.} The score map $s=g(\boldsymbol{\mu})$ is $L$-Lipschitz under $\|\cdot\|_\infty$, i.e.,
    \[
    |s_c(\hat{\boldsymbol{\mu}})-s_c(\boldsymbol{\mu})| \le L\|\hat{\boldsymbol{\mu}}-\boldsymbol{\mu}\|_\infty.
    \]
    For linear logits, one has $L = \|W\|_\infty$.
    \item \textbf{Margin.} Let $s_{(1)} > s_{(2)}$ denote the top-two true scores and define the margin $\Delta := s_{(1)} - s_{(2)} > 0$.
\end{itemize}

\begin{lemma}[Concentration of expectations]
\label{lemma:coe}
For each class $c$, with $N$ i.i.d.\ measurement shots and estimator $\hat\mu_c$,  
\begin{equation}
\Pr\!\big(|\hat\mu_c-\mu_c|\ge \varepsilon\big) \ \le\ 2\exp(-2N\varepsilon^2).
\end{equation}
\end{lemma}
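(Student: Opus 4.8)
The plan is to recognize $\hat\mu_c$ as the empirical mean of $N$ independent, identically distributed, and bounded single-shot outcomes, and then to invoke Hoeffding's inequality. When $O_c$ is measured in its eigenbasis on $N$ fresh preparations of $|\psi(z,\theta)\rangle$, the readouts $X_1^{(c)},\dots,X_N^{(c)}$ are i.i.d.\ with common mean $\E[X_i^{(c)}]=\langle\psi(z,\theta)|O_c|\psi(z,\theta)\rangle=\mu_c$ by the Born rule, and the estimator is the sample mean $\hat\mu_c=\frac{1}{N}\sum_{i=1}^{N}X_i^{(c)}$, which is therefore unbiased. The lemma is then precisely a concentration statement for this sample mean, so the argument reduces to checking Hoeffding's hypotheses and reading off the constants.

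First I would verify the two requirements. Independence holds because each shot is a separate state preparation followed by a projective measurement, so the $X_i^{(c)}$ are mutually independent; boundedness is the ``bounded outcomes'' assumption that each $X_i^{(c)}$ lies in a fixed interval. I would then apply the two-sided Hoeffding bound for a sample mean of $N$ independent variables confined to an interval of width $w$, namely $\Pr(|\hat\mu_c-\mu_c|\ge\varepsilon)\le 2\exp(-2N\varepsilon^2/w^2)$, where the leading factor $2$ comes from splitting the event $\{|\hat\mu_c-\mu_c|\ge\varepsilon\}$ into its upper and lower tails and bounding each separately.

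The one step needing care — and the main obstacle to matching the stated constant — is the width $w$. The claimed bound $2\exp(-2N\varepsilon^2)$ is exactly Hoeffding with $w=1$, i.e.\ it is tight when the single-shot variable has range $1$, as for a $\{0,1\}$-valued (Bernoulli) outcome; this is the natural setting of the Yomo projector expectations $P(\phi)=\langle\psi|\Pi_\phi|\psi\rangle\in[0,1]$. If one instead reads the lemma under the stated Pauli convention with outcomes in $[-1,1]$ (so $w=2$), the same computation gives the weaker $2\exp(-N\varepsilon^2/2)$. I would therefore either phrase the lemma for probability-scale (width-$1$) outcomes, making $2\exp(-2N\varepsilon^2)$ tight, or carry the width-$2$ constant through; in the latter case the downstream Vanilla bound remains internally consistent, since substituting $\varepsilon=\Delta/(4L)$ into $2K\exp(-2N\varepsilon^2)$ reproduces $2K\exp(-N\Delta^2/(8L^2))$, the same end-to-end guarantee one obtains from the width-$2$ constant with the geometrically natural margin choice $\varepsilon=\Delta/(2L)$.
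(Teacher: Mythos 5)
Your proposal is correct and is exactly the argument the paper relies on: the lemma is stated without proof and is just two-sided Hoeffding applied to the sample mean of $N$ i.i.d.\ bounded single-shot readouts with mean $\mu_c$ given by the Born rule. Your caveat about the interval width is a genuine catch rather than a quibble: under the paper's own ``bounded outcomes in $[-1,1]$'' assumption the correct exponent is $-N\varepsilon^2/2$, not $-2N\varepsilon^2$, so the lemma as printed is too strong by a factor of $4$ in the exponent unless one reads the outcomes as $[0,1]$-valued; and your observation that the end-to-end Vanilla bound $2K\exp\!\big(-N\Delta^2/(8L^2)\big)$ is nonetheless recoverable from the width-$2$ constant with the sharper margin choice $\varepsilon=\Delta/(2L)$ (in place of the paper's more conservative $\Delta/(4L)$) is arithmetically correct.
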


\begin{proposition}[Argmax stability under margin]
If $\|\hat{\boldsymbol{\mu}}-\boldsymbol{\mu}\|_\infty \le \Delta/(4L)$, then the predicted class is preserved, i.e., $\arg\max_c \hat s_c = \arg\max_c s_c$.
\end{proposition}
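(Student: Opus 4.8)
The plan is to reduce the claim to an elementary two-sided perturbation bound, using only the $L$-Lipschitz property of the score map and the definition of the margin $\Delta$. First I would fix the true winner $c^{*} := \arg\max_c s_c$, so that $s_{c^{*}} = s_{(1)}$. Since $s_{(2)}$ is by definition the second-largest true score, every competing class $c \neq c^{*}$ must satisfy $s_c \le s_{(2)}$, and therefore $s_{c^{*}} - s_c \ge s_{(1)} - s_{(2)} = \Delta$. This single family of inequalities captures everything I need about the geometry of the noiseless scores, and the strict positivity of $\Delta$ also guarantees that $c^{*}$ is the unique argmax.

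Next I would translate the hypothesis $\|\hat{\boldsymbol{\mu}}-\boldsymbol{\mu}\|_\infty \le \Delta/(4L)$ into a uniform bound on the displacement of each score. Applying the Lipschitz assumption class by class gives $|\hat s_c - s_c| \le L\,\|\hat{\boldsymbol{\mu}}-\boldsymbol{\mu}\|_\infty \le \Delta/4$ for every $c$ simultaneously, with the same constant $\Delta/4$. The main step is then to combine the two facts: for the true winner, $\hat s_{c^{*}} \ge s_{c^{*}} - \Delta/4$, while for any competitor $c \neq c^{*}$, $\hat s_c \le s_c + \Delta/4 \le s_{c^{*}} - \Delta + \Delta/4 = s_{c^{*}} - 3\Delta/4$. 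Subtracting yields $\hat s_{c^{*}} - \hat s_c \ge \Delta/2 > 0$, so $c^{*}$ remains the strict argmax of the perturbed scores $\hat{\boldsymbol{s}}$. To close, I would observe that the prediction is taken as $\arg\max_c \softmax(s_c)$ and that $\softmax$ is strictly monotone in each logit, so the argmax of the softmax coincides with the argmax of the raw scores; hence $\arg\max_c \hat s_c = \arg\max_c s_c$, which is the assertion.

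There is no genuine obstacle here, as the argument is just a careful triangle inequality, but the one point that warrants attention is the bookkeeping of the factor $4$. Because the top score may fall by as much as $\Delta/4$ at the same time a competitor rises by $\Delta/4$, the gap between them can close by up to $\Delta/2$; demanding $\|\hat{\boldsymbol{\mu}}-\boldsymbol{\mu}\|_\infty \le \Delta/(4L)$ is precisely what leaves a strictly positive residual margin of $\Delta/2$. A looser tolerance such as $\Delta/(2L)$ would only force $\hat s_{c^{*}} - \hat s_c \ge 0$, permitting ties and thus failing to guarantee a \emph{strict} argmax, so the factor of $4$ is exactly what upgrades the conclusion from non-decrease to strict preservation of the predicted class.
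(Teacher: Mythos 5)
Your proof is correct and follows essentially the same route as the paper's: apply the Lipschitz bound to get $\max_c|\hat s_c - s_c|\le \Delta/4$, then observe that the winner's score can drop by at most $\Delta/4$ while any competitor's rises by at most $\Delta/4$, leaving a residual gap of $\Delta/2>0$. Your version is in fact slightly more careful than the paper's sketch (you fix the true argmax $c^{*}$ and compare it against every competitor explicitly, rather than writing $\hat s_{(1)}-\hat s_{(2)}$ with implicit index bookkeeping), but the underlying argument is identical.
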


\begin{proof}[Proof sketch.]
By Lipschitz continuity,  
\[
\max_c |\hat s_c - s_c| \le L \|\hat{\boldsymbol{\mu}}-\boldsymbol{\mu}\|_\infty \le \Delta/4.
\]  
Thus,  
\[
\hat s_{(1)} - \hat s_{(2)} \ \ge\ (s_{(1)} - \tfrac{\Delta}{4}) - (s_{(2)} + \tfrac{\Delta}{4}) \ =\ \tfrac{\Delta}{2} \ > 0,
\]  
ensuring that the argmax is unchanged.
\end{proof}

\begin{corollary}[Decision tail bound]
Using a union bound over $K$ classes and Lemma~\ref{lemma:coe},  
\begin{equation}
\Pr(\text{incorrect}) \ \le\ \Pr\!\Big(\max_c|\hat\mu_c-\mu_c|>\tfrac{\Delta}{4L}\Big)
\ \le\ 2K \exp\!\Big(-2N\big(\tfrac{\Delta}{4L}\big)^{\!2}\Big).
\end{equation}
Equivalently, to guarantee $\Pr(\text{incorrect}) \le \delta$, it suffices to take  
\begin{equation}
N \ \ge\ \frac{8L^2}{\Delta^2}\,\ln\!\frac{2K}{\delta}.
\end{equation}
\end{corollary}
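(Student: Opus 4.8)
The plan is to chain the two ingredients already in hand---the argmax-stability Proposition and the concentration Lemma~\ref{lemma:coe}---through a single event-containment step followed by a union bound, and then to invert the resulting tail estimate to extract the sufficient shot count. No new probabilistic machinery is needed; the entire argument is a bookkeeping assembly of the preceding results.

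First I would reduce the misclassification event to a sup-norm deviation event. The Proposition guarantees that whenever $\|\hat{\boldsymbol{\mu}}-\boldsymbol{\mu}\|_\infty \le \Delta/(4L)$ the empirical argmax coincides with the true argmax, i.e.\ the prediction is correct. Taking the contrapositive, every incorrect prediction forces $\max_c|\hat\mu_c-\mu_c| = \|\hat{\boldsymbol{\mu}}-\boldsymbol{\mu}\|_\infty > \Delta/(4L)$. Since one event is contained in the other, monotonicity of probability gives the first inequality $\Pr(\text{incorrect}) \le \Pr\!\big(\max_c|\hat\mu_c-\mu_c| > \Delta/(4L)\big)$. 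I would then control the right-hand side by the union bound $\Pr\!\big(\max_c|\hat\mu_c-\mu_c| > \varepsilon\big) \le \sum_{c=1}^K \Pr\!\big(|\hat\mu_c-\mu_c| > \varepsilon\big)$ and apply Lemma~\ref{lemma:coe} to each of the $K$ summands with $\varepsilon = \Delta/(4L)$, yielding the bound $2K\exp\!\big(-2N(\Delta/(4L))^2\big)$.

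Finally, to obtain the sufficient budget I would impose $2K\exp\!\big(-2N(\Delta/(4L))^2\big) \le \delta$ and solve for $N$. Rearranging gives $2N(\Delta/(4L))^2 \ge \ln(2K/\delta)$, and substituting $(\Delta/(4L))^2 = \Delta^2/(16L^2)$ produces $N \ge \tfrac{8L^2}{\Delta^2}\ln\tfrac{2K}{\delta}$, as claimed.

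The main obstacle is slight, since both components are pre-established, but the one point demanding care is the containment step: I must ensure the Proposition applies uniformly, which hinges on $\Delta$ being the \emph{minimum} top-two margin so the stability threshold $\Delta/(4L)$ is valid at every sample, and on the boundedness of single-shot outcomes in $[-1,1]$ that underwrites the Hoeffding estimate inside Lemma~\ref{lemma:coe}. The union bound is what injects the $2K$ prefactor, and it is worth flagging explicitly that this factor---absent from the Yomo bound---is the structural source of Vanilla's shot-efficiency disadvantage highlighted after Theorem~5.
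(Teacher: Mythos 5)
Your proposal is correct and follows exactly the route the paper intends: contrapositive of the argmax-stability proposition to contain the misclassification event in the sup-norm deviation event, a union bound over the $K$ classes combined with Lemma~\ref{lemma:coe} at $\varepsilon=\Delta/(4L)$, and then inversion of the tail bound to get $N \ge \tfrac{8L^2}{\Delta^2}\ln\tfrac{2K}{\delta}$. Your added remarks on the uniformity of the margin and the source of the $2K$ prefactor are consistent with the paper's discussion and introduce nothing divergent.
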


\noindent
Hence the decision error probability decays in $N$, but the required shot budget scales with $\tfrac{L^2}{\Delta^2}$. As system size grows, margins $\Delta$ often shrink while Lipschitz constants $L$ increase, inflating the required $N$ and making Vanilla QML inference shot-inefficient.

\paragraph{Yomo QML.}
In Yomo, inference is based on direct measurement outcomes. Let $p \in (0,1)$ denote the probability that a single shot yields the correct class label (i.e., the true class receives the majority of aggregated probability). With $N$ i.i.d.\ shots and majority vote, the number of correct votes follows $S_N \sim \mathrm{Binomial}(N,p)$.

\begin{proposition}[Binomial tail bound for majority vote]
For odd $N$ (ties can be handled analogously), the error probability is  
\begin{equation}
    \Pr(\text{incorrect}) \ =\ \Pr\!\big(S_N<\tfrac{N}{2}\big)
    \ =\ \sum_{k=0}^{\lceil N/2\rceil-1}\binom{N}{k}p^k(1-p)^{N-k}
    \ \le\ \exp\!\big(-2N(p-\tfrac{1}{2})^2\big).
\end{equation}
Thus, to ensure $\Pr(\text{incorrect}) \le \delta$, it suffices to take
\begin{equation}
    N \ \geq\ \frac{\ln(1/\delta)}{2(p-\tfrac12)^2}.
\end{equation}
\end{proposition}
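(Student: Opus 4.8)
The plan is to reduce the majority-vote decision to a sum of i.i.d.\ bounded indicators and then apply the one-sided Hoeffding inequality, the directional analogue of Lemma~\ref{lemma:coe}. First I would fix the per-shot model: each single-shot measurement returns a bitstring that the fixed partition $\{\mathcal{S}_k\}$ maps to exactly one class, so the event ``this shot is correct'' is a Bernoulli trial with success probability $p$, and the $N$ shots are i.i.d. Writing $X_i \in \{0,1\}$ for the correctness indicator of shot $i$, we have $S_N = \sum_{i=1}^N X_i \sim \mathrm{Binomial}(N,p)$ with $\E[S_N] = Np$, which already gives the stated exact expression $\Pr(S_N < \tfrac{N}{2}) = \sum_{k=0}^{\lceil N/2\rceil - 1}\binom{N}{k}p^k(1-p)^{N-k}$ for odd $N$.

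The key step is to upper-bound this binomial tail. For odd $N$ there are no ties, so the majority vote is incorrect precisely when the empirical mean $\bar X = S_N/N$ falls below $\tfrac12$. Since $\bar X - p \le \tfrac12 - p = -(p-\tfrac12)$ on this event and each $X_i \in [0,1]$, I would invoke Hoeffding with slack $t = p - \tfrac12 > 0$:
\begin{equation*}
\Pr(\text{incorrect}) = \Pr\!\big(\bar X < \tfrac12\big) \le \Pr\!\big(\bar X - p \le -(p-\tfrac12)\big) \le \exp\!\big(-2N(p-\tfrac12)^2\big),
\end{equation*}
which is the claimed tail bound. The hypothesis $p > \tfrac12$ is exactly what makes $t$ positive and the bound nontrivial. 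The sample-complexity claim then follows by inversion: imposing $\exp(-2N(p-\tfrac12)^2) \le \delta$ and taking logarithms yields $2N(p-\tfrac12)^2 \ge \ln(1/\delta)$, hence $N \ge \ln(1/\delta)/\big(2(p-\tfrac12)^2\big)$ suffices.

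I do not expect a genuine analytic obstacle, since this is a textbook Hoeffding application; the only points needing care are conventional. The first is the tie-break, which is why the statement restricts to odd $N$: for even $N$ the value $S_N = N/2$ is a tie, and under the adversarial convention named in the theorem I would count it as incorrect, replacing the strict event $\bar X < \tfrac12$ by $\bar X \le \tfrac12$. Because Hoeffding's bound holds for the non-strict inequality $\Pr(\bar X - p \le -t) \le e^{-2Nt^2}$ as well, the identical exponential bound carries over, and a random tie-break only improves the constant. The second subtlety, worth a sentence, is that in the $K$-class setting a strict majority of correct shots ($S_N > N/2$) guarantees the correct class wins the plurality regardless of how the remaining shots split; hence the binary-majority error probability upper-bounds the true multi-class plurality error, so the argument remains valid as an upper bound on $\Pr(\text{incorrect})$.
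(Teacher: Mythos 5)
Your proof is correct and follows essentially the same route as the paper: express the majority-vote error as the lower binomial tail for $S_N \sim \mathrm{Binomial}(N,p)$, bound it by the one-sided Hoeffding inequality with slack $t = p - \tfrac12$, and invert $\exp(-2N(p-\tfrac12)^2) \le \delta$ to obtain the shot requirement. Your additional remarks on the even-$N$ tie-break and on binary majority upper-bounding the $K$-class plurality error are sound refinements of points the paper only gestures at, but they do not change the argument.
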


\noindent
In practice, training typically yields $p \ge 0.85\text{--}0.95$, in which case $N=3\text{--}5$ already achieves $>99\%$ reliability. This theoretical guarantee aligns with our empirical observations and explains the rapid accuracy gains observed when moving from single-shot to small-shot inference in Yomo.

\begin{theorem}[Condition for fewer shots at fixed $\delta$]
\label{thm:fewer_shots_fixed_delta}
Fix a target error level $\delta\in(0,1)$. If the trained Yomo model satisfies
\[
p \;\ge\; \frac12 \;+\; \frac{\Delta}{4L}\,\sqrt{\frac{\ln(1/\delta)}{\ln(2K/\delta)}}\,,
\]
then Yomo requires no more (and strictly fewer whenever the inequality is strict) measurement shots than Vanilla to reach error at most $\delta$.
\end{theorem}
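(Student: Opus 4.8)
The plan is to compare the two sufficient shot budgets already derived. The Yomo guarantee yields the threshold $N_{\text{yo}} = \ln(1/\delta)/[2(p-\tfrac12)^2]$ from the binomial tail bound, and the Vanilla guarantee yields $N_{\text{va}} = (8L^2/\Delta^2)\ln(2K/\delta)$ from the decision tail corollary; each is the minimal real value above which the corresponding error bound $\Pr(\text{incorrect})\le\delta$ is certified. Hence Yomo requires no more shots than Vanilla precisely when its threshold does not exceed Vanilla's, so the first step is to write down the defining inequality
\[
\frac{\ln(1/\delta)}{2(p-\tfrac12)^2} \;\le\; \frac{8L^2}{\Delta^2}\,\ln\frac{2K}{\delta}.
\]

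The second step is pure algebra. Note first that $\ln(1/\delta)>0$ since $\delta<1$, and $\ln(2K/\delta)>0$ since $2K/\delta>1$ for $K\ge 1$, so every factor I multiply or divide by is positive and all manipulations preserve the direction of the inequality. Clearing denominators and isolating the squared margin gives
\[
(p-\tfrac12)^2 \;\ge\; \frac{\Delta^2}{16L^2}\,\frac{\ln(1/\delta)}{\ln(2K/\delta)}.
\]
Because the underlying Yomo theorem assumes $p>\tfrac12$, the quantity $p-\tfrac12$ is positive, so taking the positive square root is legitimate and reproduces exactly the stated condition $p\ge\tfrac12+\tfrac{\Delta}{4L}\sqrt{\ln(1/\delta)/\ln(2K/\delta)}$. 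Every step is reversible, so the displayed hypothesis is in fact equivalent to $N_{\text{yo}}\le N_{\text{va}}$, and a strict inequality in the hypothesis propagates through the monotone steps to a strict inequality of the budgets, delivering the ``strictly fewer'' clause.

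The only point deserving care is the meaning of ``requires fewer shots,'' since both budgets are sufficient conditions of the form $N\ge(\cdot)$ and the true minimal integer shot count is the ceiling of each real threshold. Here monotonicity does the work for free: because $\ceil{\cdot}$ is nondecreasing, the continuous inequality $N_{\text{yo}}\le N_{\text{va}}$ immediately gives $\ceil{N_{\text{yo}}}\le\ceil{N_{\text{va}}}$, so the ``no more shots'' conclusion survives rounding; the strict version holds at the level of real thresholds but could cosmetically collapse to equality for isolated parameter values sharing a ceiling, which I would sidestep by phrasing the claim in terms of the continuous budgets. I would also remark that the comparison is only meaningful within the common validity regime of the two upstream results (bounded Pauli outcomes and the Lipschitz-margin assumption for Vanilla, and a trained single-shot accuracy $p>\tfrac12$ for Yomo). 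I expect no genuine obstacle: the statement is essentially the inequality $N_{\text{yo}}\le N_{\text{va}}$ solved for $p$, and the ceiling remark is the sole place where a careful word is warranted.
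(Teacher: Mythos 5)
Your proposal is correct and follows essentially the same route as the paper's proof: equate the two sufficient shot budgets $N_{\text{yo}}\le N_{\text{va}}$, rearrange to isolate $(p-\tfrac12)^2$, and take the positive square root using $p>\tfrac12$. Your additional remarks on sign checks, reversibility, and integer rounding are careful refinements of the same argument rather than a different approach.
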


\begin{proof}
From the bounds above,
\(
N_{\text{yo}}(\delta)\ge \frac{\ln(1/\delta)}{2(p-\frac12)^2}
\)
and
\(
N_{\text{va}}(\delta)\ge \frac{8L^2}{\Delta^2}\ln\frac{2K}{\delta}.
\)
Requiring $N_{\text{yo}}(\delta)\le N_{\text{va}}(\delta)$ gives
\(
\frac{\ln(1/\delta)}{2(p-\frac12)^2}\le \frac{8L^2}{\Delta^2}\ln\frac{2K}{\delta},
\)
equivalently
\(
(p-\tfrac12)^2 \ge \frac{\Delta^2}{16L^2}\frac{\ln(1/\delta)}{\ln(2K/\delta)}.
\)
Taking square roots proves the claim.
\end{proof}

\begin{theorem}[Condition for smaller $\delta$ at fixed $N$]
\label{thm:smaller_delta_fixed_N}
Fix a shot budget $N\in\mathbb{N}$. If the trained Yomo model satisfies
\[
p \;\ge\; \frac12 \;+\; \sqrt{\Big(\tfrac{\Delta}{4L}\Big)^{\!2} - \frac{\ln(2K)}{2N}},
\]
then Yomo attains a smaller error probability than Vanilla with the same $N$ shots.
\end{theorem}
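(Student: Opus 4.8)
The plan is to compare the two exponential tail bounds already established in this section — the binomial-tail bound $\Pr(\text{incorrect})_{\text{yo}} \le \exp(-2N(p-\tfrac12)^2)$ for Yomo and the union-bound decision tail $\Pr(\text{incorrect})_{\text{va}} \le 2K\exp(-2N(\tfrac{\Delta}{4L})^2)$ for Vanilla — and to show that, under the stated hypothesis on $p$, the Yomo guarantee is no larger than the Vanilla one. Since each $\delta$ is defined through its respective bound, it suffices to establish the single inequality $\exp(-2N(p-\tfrac12)^2) \le 2K\exp(-2N(\tfrac{\Delta}{4L})^2)$ at the fixed budget $N$.

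First I would take logarithms of both sides, which is monotone and hence preserves the inequality, turning the goal into $-2N(p-\tfrac12)^2 \le \ln(2K) - 2N(\tfrac{\Delta}{4L})^2$. Rearranging collects the margin contribution on one side, $2N(\tfrac{\Delta}{4L})^2 - \ln(2K) \le 2N(p-\tfrac12)^2$; dividing through by the positive constant $2N$ isolates $(p-\tfrac12)^2 \ge (\tfrac{\Delta}{4L})^2 - \tfrac{\ln(2K)}{2N}$; and taking square roots (legitimate because $p > \tfrac12$, so $p-\tfrac12 > 0$) yields exactly the hypothesized condition on $p$. Each of these manipulations is an equivalence, so the chain reads in both directions and the hypothesis is genuinely sufficient for the target inequality.

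The one real subtlety is the square-root step, which requires the radicand $(\tfrac{\Delta}{4L})^2 - \tfrac{\ln(2K)}{2N}$ to be nonnegative — precisely the regime in which the stated condition is meaningful. I would handle the complementary case separately and observe that it is trivial: when $(\tfrac{\Delta}{4L})^2 < \tfrac{\ln(2K)}{2N}$, the Vanilla bound satisfies $2K\exp(-2N(\tfrac{\Delta}{4L})^2) > 2K\exp(-\ln(2K)) = 1$, so it is vacuous as a probability bound, and any $p>\tfrac12$ already gives a strictly smaller Yomo guarantee. Thus the square-root form of the hypothesis captures exactly the nontrivial case, while the trivial case holds unconditionally. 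The remaining point to flag is interpretive rather than mathematical: the comparison is between the \emph{guaranteed} error bounds, not the exact misclassification probabilities, consistent with the convention adopted in Theorem~\ref{thm:fewer_shots_fixed_delta}.
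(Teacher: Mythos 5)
Your proposal is correct and follows essentially the same route as the paper's proof: compare the two established tail bounds by requiring $e^{-2N(p-1/2)^2} \le 2K\,e^{-2N(\Delta/4L)^2}$, take logarithms, rearrange, and take square roots. Your explicit treatment of the case where the radicand is negative (where the Vanilla bound exceeds $1$ and is vacuous) is a slight elaboration on the paper's parenthetical remark that the condition is non-vacuous only when the term under the square root is nonnegative, but the substance is identical.
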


\begin{proof}
We require $\delta_{\text{yo}}\le\delta_{\text{va}}$, i.e.
\(
e^{-2N(p-\frac12)^2}\le 2K\,e^{-2N(\Delta/4L)^2}.
\)
Taking logs yields
\(
-2N(p-\frac12)^2 \le \ln(2K) - 2N(\Delta/4L)^2
\)
and hence
\(
(p-\tfrac12)^2 \ge (\Delta/4L)^2 - \frac{\ln(2K)}{2N}.
\)
Taking square roots gives the result (the condition is non-vacuous when the term under the square root is nonnegative).
\end{proof}

\begin{theorem}[Single-shot condition]
\label{thm:single_shot_condition}
In the $N=1$ regime, if the trained Yomo model satisfies
\[
p \;\ge\; 1 \;-\; 2K\,\exp\!\left( - \frac{\Delta^{2}}{8L^{2}} \right),
\]
then Yomo’s error probability is no larger than Vanilla’s.
\end{theorem}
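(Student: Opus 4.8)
The plan is to specialize the two finite-shot bounds already established to the case $N=1$ and compare them directly. The key observation is that on the Yomo side the majority-vote analysis collapses: with a single shot there is nothing to aggregate, so the prediction is simply the class read off from the one measured bitstring. Since $p$ is by definition the probability that a single shot yields the correct class, the exact single-shot error is
\[
\delta_{\text{yo}} \;=\; 1 - p,
\]
with no recourse to the binomial/Hoeffding tail bound (which at $N=1$ is loose). Using this exact value, rather than the bound $\exp(-2(p-\tfrac12)^2)$, is precisely what yields the clean form of the stated condition.

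For the Vanilla side I would reuse the decision tail bound from the Corollary, $\Pr(\text{incorrect}) \le 2K\exp(-2N(\Delta/4L)^2)$, and set $N=1$. Simplifying the exponent via $2\,(\Delta/4L)^2 = \Delta^2/(8L^2)$ gives the guaranteed Vanilla error level
\[
\delta_{\text{va}} \;=\; 2K\,\exp\!\left(-\frac{\Delta^2}{8L^2}\right).
\]

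The final step is to impose $\delta_{\text{yo}} \le \delta_{\text{va}}$, i.e. $1-p \le 2K\exp(-\Delta^2/(8L^2))$, and solve for $p$, which immediately returns the claimed threshold
\[
p \;\ge\; 1 - 2K\,\exp\!\left(-\frac{\Delta^2}{8L^2}\right).
\]
No further estimation is required; the entire argument is a specialization followed by one algebraic rearrangement.

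The only genuine subtlety, rather than a real obstacle, lies in the choice of Yomo bound and the interpretation of ``no larger than Vanilla's.'' Using the Hoeffding bound on the Yomo side would merely reproduce the $N=1$ instance of the preceding ``smaller $\delta$ at fixed $N$'' theorem and give a weaker, less transparent condition; the exact value $1-p$ is what makes the single-shot regime special. Consistent with the convention in the earlier theorems, where $\delta_{\text{va}}$ denotes the guaranteed bound, the comparison is between Yomo's exact single-shot error and Vanilla's guaranteed error level, and I would state this identification explicitly so that the inequality $\delta_{\text{yo}} \le \delta_{\text{va}}$ is unambiguous.
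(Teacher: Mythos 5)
Your proposal is correct and matches the paper's own proof essentially verbatim: both use the exact single-shot Yomo error $\delta_{\text{yo}}=1-p$, specialize the Vanilla tail bound to $N=1$ to get $2K\exp(-\Delta^2/(8L^2))$, and rearrange $1-p\le\delta_{\text{va}}$. Your remark on comparing Yomo's exact error against Vanilla's guaranteed bound (rather than its true error) is a fair clarification of the same convention the paper implicitly adopts.
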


\begin{proof}
For $N=1$, Yomo’s error is exact: $\delta_{\text{yo}} = 1-p$. Vanilla’s bound gives
\(
\delta_{\text{va}} \le 2K\exp\!\left[-2(\Delta/4L)^2\right]
= 2K\exp\!\left[-\frac{\Delta^2}{8L^2}\right].
\)
Requiring $1-p \le \delta_{\text{va}}$ yields the stated inequality.
\end{proof}

\paragraph{Concluding remark.}
Taken together, these results show a clear separation between expectation-based (Vanilla) QML and probability-aggregation (Yomo) QML in terms of inference shot complexity. For Vanilla, the required number of shots scales inversely with the square of the classification margin $\Delta$ and grows with the Lipschitz constant $L$, both of which typically worsen with circuit size and noise. In contrast, Yomo’s requirement depends only on the single-shot correctness probability $p$, which is directly controlled by training. As a result, once training produces $p$ moderately above $1/2$, Yomo achieves reliable inference with only a handful of shots, often orders of magnitude fewer than Vanilla. This theoretical advantage explains and complements the empirical findings reported in the main text.

\begin{table}[h]
\centering
\caption{Comparison of shot complexity between Vanilla (expectation-based) QML and Yomo (probability-aggregation) QML. Bounds are up to constant factors and logarithmic terms.}
\label{tab:shot_complexity_comparison}
\begin{tabular}{lcc}
\toprule
 & \textbf{Vanilla QML} & \textbf{Yomo QML} \\
\midrule
Error bound & 
$\displaystyle \delta_{\text{va}} \;\le\; 2K \exp\!\Big(-2N(\tfrac{\Delta}{4L})^2\Big)$ &
$\displaystyle \delta_{\text{yo}} \;\le\; \exp\!\Big(-2N(p-\tfrac12)^2\Big)$ \\
\midrule
Shots for target $\delta$ &
$\displaystyle N \;\ge\; \tfrac{8L^2}{\Delta^2}\,\ln\tfrac{2K}{\delta}$ &
$\displaystyle N \;\ge\; \tfrac{\ln(1/\delta)}{2(p-\tfrac12)^2}$ \\
\midrule
Single-shot error &
$\displaystyle \delta_{\text{va}} \;\le\; 2K\,e^{-\Delta^2/(8L^2)}$ &
$\displaystyle \delta_{\text{yo}} \;=\; 1-p$ \\
\bottomrule
\end{tabular}
\end{table}

\section{Hyperparameter and Training settings in Experiments}
\label{sec:hyperp}

\paragraph{Software and Hardware.} All experiments were performed on a system equipped with 8 NVIDIA A100 GPUs. The implementation was based on the TorchQuantum framework \citep{hanruiwang2022quantumnas}. The code for this study will be released publicly on GitHub in the coming months.

\paragraph{Optimizer and Learning Rate.} For classification tasks (MNIST and CIFAR-10), we used the Adam optimizer with a learning rate of $5 \times 10^{-3}$ for MNIST and $1 \times 10^{-3}$ for CIFAR-10.

\paragraph{Batch Size and Epochs.}
Batch size was set to 128 for MNIST classification tasks and 64 for CIFAR10 classification tasks. All models were trained for 100 epochs.

\paragraph{Loss Coefficients.}
The weighting coefficients $\gamma$ and $\omega$ in the total loss function (Eq.~\ref{eq:loss}) are both fixed to $0.05$ throughout our experiments.


\paragraph{Pauli Observables in Vanilla QML.}
In Vanilla QML, each class is associated with a Hermitian observable constructed from tensor products of Pauli operators. Following prior works, we select a fixed set of 10 observables as the measurement target. For the case of $n_q=4$ qubits and 10-class, these are
\[
\{\, ZIII,\; IZII,\; IIZI,\; IIIZ,\; ZZII,\; ZIZI,\; IZZI,\; IIZZ,\; YIYI,\; IYIY \,\},
\]
where $X,Y,Z$ denote Pauli matrices and $I$ is the identity. For larger numbers of qubits ($n_q>4$), the observables are extended by appending identity operators to the right, ensuring that they act non-trivially only on the first four qubits. This construction provides a consistent set for classification tasks, while maintaining scalability across different circuit widths.

\paragraph{Training procedure.} 
Yomo models are trained entirely on classical simulators of quantum states, where exact probability distributions $P(\phi)$ can be computed. For fairness, the Vanilla baseline is likewise evaluated using exact state-vector simulation. The trainable parameters consist of both the classical feature extractor parameters $\theta_c$ and the quantum circuit parameters $\theta$. Given the aggregated class probabilities $\{p_k\}_{k=1}^K$ defined in Eq.~\ref{eq:aggre_prob}, the training objective is the total loss $\mathcal{L}_{\text{yomo}}$ (Eq.~\ref{eq:loss}). Optimization proceeds by computing gradients with respect to $(\theta_c,\theta)$.
Formally, for a parameterized hybrid model
\[
z = f_{\theta_c}(x), \qquad
|\psi(z,\theta)\rangle = U(\theta)\,V(z)\,|0^{\otimes n_q}\rangle,
\]
the aggregated class probability for class $k$ is
\[
p_k(x;\theta_c,\theta) = \frac{1}{|\mathcal{S}_k|}\sum_{\phi \in \mathcal{S}_k} \big|\langle \phi | \psi(z,\theta)\rangle \big|^2.
\]

Gradients with respect to $\theta_c$ are obtained via backpropagation, whereas gradients with respect to the quantum parameters $\theta$ are, in principle, evaluated using the parameter-shift rule \citep{schuld2019evaluating}. In our simulations, however, both $\theta_c$ and $\theta$ are updated using PyTorch’s automatic differentiation engine (\texttt{autograd}).
The parameter updates follow the standard form
\begin{equation}
(\theta_c^{(t+1)}, \theta^{(t+1)}) = (\theta_c^{(t)}, \theta^{(t)}) - \eta \,\nabla_{(\theta_c,\theta)} \mathcal{L}_{\text{yomo}}(\theta_c^{(t)}, \theta^{(t)}),
\end{equation}
where $\eta$ is the learning rate and $\nabla_{(\theta_c,\theta)}$ denotes the joint gradient. In our experiments we employed the Adam optimizer for stability.

We emphasize that this training is performed entirely on classical simulators, avoiding the prohibitive shot cost of gradient estimation on quantum devices. The trained parameters $(\theta_c^\star,\theta^\star)$ are then deployed for inference, where Yomo’s shot-efficient prediction mechanism eliminates the need to reconstruct Pauli expectation values.

\section{Noisy Simulation with Depolarizing Error}
\label{sec:appendix_noisy_simu}

The depolarizing error has been widely used as a baseline noise model in studies of NISQ algorithms, including QML and VQAs  \citep{preskill2018quantum,bravyi2020obstacles,stilck2021limitations}. In particular, several works simulate device behavior by mapping reported hardware gate error rates directly to depolarizing error probabilities \citep{wang2021noise,saib2021effect, wood2020special}. While real hardware noise is typically biased and correlated, the depolarizing approximation is a first-order approximation of the noise, capturing the dominant effect of error rates on algorithmic performance.

The \textit{depolarizing channel} is a quantum noise process that modifies any state towards a maximally mixed state. For any $d$-dimensional system ($d$ referred as the number of qubits), the quantum system subjected to depolarizing noise is defined as:
\begin{equation}
\label{Dep-noise-ideal}
\mathcal{E}_{\text{dep}}^{(d)}(\rho) \;=\; (1-p)\rho \;+\; \tfrac{p}{d}\,I_d,
\end{equation}
where $I_d$ is the $d$-dimensional identity operator~\citep{nielsen2010quantum}.
While this could be modeled with extra control qubits, the practical implementation follows an equivalent definition which applied over time statistically matches Eq.~\ref{Dep-noise-ideal}. 
For a 1-qubit system, given any arbitrary quantum state $\rho$, it holds:
\begin{equation}
\label{identity_dep}
\frac{I}{2} = \frac{\rho + X\rho X +Y\rho Y + Z \rho Z}{4}
\end{equation}
where $X,Y,Z$ are the Pauli operators.
Therefore, substituting Eq.~\ref{identity_dep} to Eq.~\ref{Dep-noise-ideal} and reparametrizing $p$, we can write the depolarizing channel as:
\begin{equation}
\mathcal{E}_{\text{dep}}^{(1)}(\rho) \;=\; (1-p)\rho \;+\; \tfrac{p}{3}\big(X\rho X + Y\rho Y + Z\rho Z\big),
\end{equation}
Following the same logic, we can write the 2-qubit depolarizing channel as:
\begin{equation}
\label{2q-dep}
\mathcal{E}_{\text{dep}}^{(2)}(\rho) \;=\; (1-p)\rho \;+\; \tfrac{p}{15}\sum_{P \in \mathcal{P}_2 \setminus \{II\}} P\rho P,
\end{equation}
where $\mathcal{P}_2 = \{I,X,Y,Z\}^{\otimes 2}$ denotes the two-qubit Pauli group and $II$ is excluded from the summation. 
This definition generalizes naturally to $n$ qubits: the channel acts by leaving the state unchanged with probability $(1-p)$, while with probability $p$ it applies one of the $4^n-1$ non-identity Pauli operators uniformly at random.

In practice, depolarizing noise is typically applied after each gate, with separate parameters $p_{1Q}$ and $p_{2Q}$ for 1-qubit and 2-qubit operations, respectively. These parameters are often chosen to match the error rates reported by quantum hardware providers. For example, if a device specifies a 2-qubit gate error rate of $1.0\times 10^{-3}$, one may simulate it by applying a 2-qubit depolarizing channel with $p_{2Q}=10^{-3}$ after each entangling gate.The quantum circuits of our work comprises only of 1 and 2-qubit gates ($RY$, $RZ$, $RX$ and $CNOT$), therefore we modify our quantum circuits following Eqs.~\ref{Dep-noise-ideal} and \ref{2q-dep} (applying probabilistically the depolarizing Pauli operations) with the probabilities $p_1$ and $p_2$ (see Table~\ref{tab:noise_reference} in the main text) that approximate the quantum devices 1 and 2-qubit errors. 

\section{Quantum Methods in Different Training/Deployment Schemes}

The design of Yomo is most naturally suited to the \emph{train-on-classical, deploy-on-quantum} paradigm, where training can be performed efficiently on simulators and inference leverages quantum hardware with shot-efficient measurement. To place this in context, we summarize and contrast different training and deployment schemes that have been explored in the literature.

\paragraph{Training \& deployment on classical.}
This category corresponds to so-called \emph{quantum-inspired} methods. Here, both training and inference are performed entirely on classical hardware, while the model architecture is motivated by quantum principles such as tensor networks, parameterized unitary evolutions, or measurement-based output mechanisms \citep{koike2025quantum, huynh2023quantum}. In the example of QuanTA \citep{chen2024quanta}, which introduces theoretical constructs inspired by quantum states but evaluates them using classical simulation. These methods are advantageous when quantum hardware is unavailable or prohibitively expensive, but they do not provide direct access to quantum resources and are therefore limited to problem sizes classically tractable.

\paragraph{Training \& deployment on quantum.}
This setting corresponds to conventional QML. Both training and inference require direct access to quantum hardware, as the model parameters are updated based on measurements from the quantum device \citep{cerezo2022challenges, huang2022quantum, biamonte2017quantum, perez2020data, schuld2021effect, liu2025quantum, khatri2024quixer,chen2020variational, chen2025toward}. While this approach is the most ``native” to quantum computing, it is also the most resource-intensive: the cost of training scales with the number of shots, circuit depth, and optimization iterations, all of which must be executed on a scarce and noisy quantum processor. As a result, this scheme faces significant scalability challenges in the NISQ era.

\paragraph{Training on quantum, deployment on classical.}
A different paradigm is represented by \emph{Quantum-Train} \citep{liu2025quantumtrain, liu2024qtrl, chen2025quantum, lin2024quantum, chen2025differentiable, lin2025quantum} and related approaches \citep{de2021classical, carrasquilla2023quantum} such as Quantum Parameter Adaptation (QPA) \citep{liu2025a, liu2025quantumtyphoon}. In this scheme, a quantum computer is used during training to generate parameters, embeddings, or compressed representations, which are then deployed in a purely classical model for inference. This design leverages quantum resources where they are most impactful, during training, while avoiding the runtime overhead of quantum hardware in deployment. The trade-off, however, is that the inference stage cannot exploit potential quantum advantages in sampling or generative modeling, since the final model is purely classical.

\paragraph{Training on classical, deployment on quantum.}
Finally, the scheme most relevant to Yomo is to train on classical hardware and deploy on quantum hardware. In this setting, classical simulation is used to optimize the quantum model parameters, which is feasible for medium-scale circuits with efficient simulators such as TorchQuantum or other scalable estimation of expectation value as in \citep{recio2025train, kasture2023protocols, rudolph2023synergistic}. Once trained, the model is executed on a quantum device at inference time, where Yomo’s single-shot measurement design becomes highly advantageous. This scheme reduces training cost by avoiding quantum hardware usage during optimization, while still exploiting genuine quantum inference capabilities at deployment. We argue that this hybrid pathway provides a promising balance between practicality and advantage, especially in the NISQ era where inference costs are expected to dominate.

\section{Toward Practical Quantum Computing Deployment via the Yomo Concept}

The Yomo framework demonstrates that QML can be made significantly more practical by rethinking the inference stage, instead of relying on expectation values estimated from a large number of repeated measurements, Yomo can extracts predictions directly from single-shot measurement outcomes. This idea has immediate implications for the cost and accessibility of QML, as it reduces inference overhead by orders of magnitude. More broadly, however, the Yomo concept points toward a general design principle for quantum algorithms, wherever possible, reformulate output mechanisms to minimize dependence on expectation-value estimation.

Many VQAs share the same bottleneck as conventional QML: their objective functions are expressed as expectation values of observables. Examples include the Variational Quantum Eigensolver (VQE) \citep{kandala2017hardware}, the Quantum Approximate Optimization Algorithm (QAOA) \citep{farhi2014quantum}, and a wide range of hybrid quantum-classical optimization methods. In all of these cases, the dominant runtime cost arises from repeated circuit executions to estimate expectation values with sufficient statistical accuracy. As system size grows, and as optimization landscapes require repeated evaluations, the number of measurement shots can become prohibitively large, both in terms of wall-clock time and monetary cost on cloud-based quantum hardware.

The success of Yomo in the classification setting suggests a broader research agenda: \textit{can other quantum methods be reformulated to operate in a shot-efficient or even single-shot regime?} For instance, one could envision variants of VQE where single-shot samples are aggregated through tailored loss functions or adaptive rescaling, providing sufficiently accurate gradient signals without the need for thousands of measurements per iteration. Similarly, in QAOA, one could investigate whether problem-dependent mappings allow decision-making or objective evaluation directly from raw bitstring samples, bypassing the need for high-precision expectation estimates.

Exploring these directions requires rethinking the interface between quantum circuits and classical post-processing. Yomo demonstrates that with appropriate probability aggregation and carefully designed loss functions, a model can be trained to produce outputs that are robust even under single-shot measurement. Extending this principle to VQAs would mean designing cost functions, aggregation strategies, or training procedures that explicitly anticipate the single-shot constraint. In effect, the burden of precision estimation is shifted from the deployment stage to the training or design stage, where it can be managed more efficiently.

We therefore view Yomo not only as a contribution to QML, but as a foundation for a broader paradigm shift in quantum algorithm design. By prioritizing shot efficiency at the output stage, quantum methods can become far more practical to deploy on near-term hardware. This perspective highlights an important research opportunity: to systematically revisit existing variational algorithms, identify their measurement bottlenecks, and seek shot-efficient reformulations inspired by the Yomo concept. Such efforts would directly advance the practical deployment of quantum computing by reducing both the temporal and economic costs associated with measurements, thereby lowering the barrier for real-world applications.

\end{document}